\let\c@author\relax
\newcommand\textvtt[1]{{\normalfont\fontfamily{cmvtt}\selectfont #1}}
\pgfplotsset{compat=1.18}
\definecolor{fxnote}{HTML}{43AA8B}
\definecolor{fxwarning}{HTML}{F3722C}
\definecolor{fxfatal}{HTML}{F94144}
\newcommand{\edit}[1]{{\color{BrickRed} #1}}
\setlist[itemize]{label=$\circ$}
\begin{document}

\parindent0pt

\def\snote#1{${}^{#1}$}
\setlength{\parskip}{0.5em}
\begin{frontmatter}
\mbox{\ }
\vskip1in
{\bf\Large CoT Information: Improved Sample Complexity \\[5pt] under Chain-of-Thought Supervision}
\begin{aug}
\vskip15pt
\address{
\begin{tabular}{ccccc}
{\normalsize\rm\bfseries Awni Altabaa}\snote{1} &
{\normalsize\rm\bfseries Omar Montasser}\snote{2} & {\normalsize\rm\bfseries John Lafferty}\snote{3}\\[5pt]
\end{tabular}
\vskip5pt
\footnotetext{
\snote{1}Department of Statistics and Data Science, Yale University; \textit{awni.altabaa@yale.edu}.
\snote{2}Department of Statistics and Data Science, Yale University; \textit{omar.montasser@yale.edu}.
\snote{3}Department of Statistics and Data Science, Wu Tsai Institute, Yale University; \textit{john.lafferty@yale.edu}.
}
\today
\vskip10pt
}
\begin{abstract}
Learning complex functions that involve multi-step reasoning poses a significant challenge for standard supervised learning from input-output examples.  Chain-of-thought (CoT) supervision, which provides intermediate reasoning steps together with the final output, has emerged as a powerful empirical technique, underpinning much of the recent progress in the reasoning capabilities of large language models.  This paper develops a statistical theory of learning under CoT supervision. 
A key characteristic of the CoT setting, in contrast to standard supervision, is the mismatch between the training objective (CoT risk) and the test objective (end-to-end risk).
A central part of our analysis, distinguished from prior work, is explicitly linking those two types of risk to achieve sharper sample complexity bounds.
This is achieved via the \textit{CoT information measure} $\cotinfo(\epsilon; \calH)$, which quantifies the additional discriminative power gained from observing the reasoning process.
The main theoretical results demonstrate how CoT supervision can yield significantly faster learning rates compared to standard E2E supervision. Specifically, it is shown that the sample complexity required to achieve a target E2E error $\epsilon$ scales as $d/\cotinfo(\epsilon; \calH)$, where $d$ is a measure of hypothesis class complexity, which can be much faster than standard $d/\epsilon$ rates.  Information-theoretic lower bounds in terms of the CoT information are also obtained. Together, these results suggest that CoT information is a fundamental measure of statistical complexity for learning under chain-of-thought supervision.
\end{abstract}

\end{aug}
\end{frontmatter}

\clearpage
\tableofcontents
\clearpage

\def\prompt#1{``\textvtt{\small #1}''}
\def\answer#1{``\textvtt{\small #1}''}
\def\cotz#1{``\textvtt{\small #1}''}
\long\def\comment#1{}

\section{Introduction}

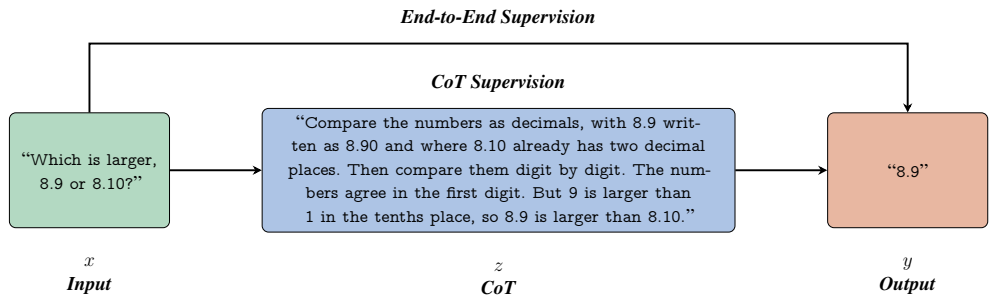
\begin{figure}[t]
    \centering
    \resizebox{0.8\textwidth}{!}{
    \begin{tikzpicture}[
    node distance=1cm and 2cm,
    base_box/.style={
        draw,
        rectangle,
        rounded corners,
        minimum height=2.5cm,
        align=center
    },
    input_style/.style={
        base_box,
        text width=3.2cm,
        fill=ForestGreen!30 
    },
    cot_style/.style={
        base_box,
        text width=10.0cm,
        fill=RoyalBlue!30 
    },
    output_style/.style={
        base_box,
        text width=3.2cm,
        fill=BrickRed!30 
    },
    col_label/.style={font=\bfseries\itshape, node distance=0.5cm, align=center},
    supervision_label_style/.style={font=\bfseries\itshape, align=center}
]

\newcommand{\xcontent}{\prompt{Which is larger, 8.9 or 8.10?}}
\newcommand{\ycontent}{\answer{8.9}}
\newcommand{\zcontent}{\cotz{Compare the numbers as decimals, with 8.9 written as 8.90 and where 8.10 already has two decimal places. Then compare them digit by digit. The numbers agree in the first digit. But 9 is larger than 1 in the tenths place, so 8.9 is larger than 8.10.}}

\node[input_style] (x_node) {\xcontent};
\node[cot_style, right=of x_node] (z_node) {\zcontent};
\node[output_style, right=of z_node] (y_node) {\ycontent};

\node[col_label, below=of x_node] (xlabel) {$x$\\ Input};
\node[col_label, below=of z_node] (zlabel) {$z$\\ CoT};
\node[col_label, below=of y_node] (ylabel) {$y$\\ Output};

\node[supervision_label_style, above=0.2cm of z_node.north] (cot_supervision_label) {CoT Supervision};

\draw[-{Stealth[length=2mm, width=2mm]}, very thick]
    (x_node.north) -- ++(0,1.5cm) 
    coordinate (e2e_top_left) 
    -- (e2e_top_left -| y_node.north) 
    coordinate (e2e_top_right) 
    node[midway, above=2mm, supervision_label_style] {End-to-End Supervision} 
    -- (y_node.north); 

\draw[-{Stealth[length=2mm, width=2mm]}, very thick] (x_node.east) -- (z_node.west);
\draw[-{Stealth[length=2mm, width=2mm]}, very thick] (z_node.east) -- (y_node.west);

\end{tikzpicture}
    }
    \caption{\small An illustration of standard end-to-end supervision and CoT supervision. Our theoretical framework is aimed at understanding tradeoffs between end-to-end supervision and  CoT supervision, and in particular, how the potentially richer information in the CoT signal can result in faster learning rates.}
    \vskip-12pt
    \label{fig:cot_cartoon_example}
\end{figure}

``Chain-of-thought'' (CoT) reasoning has been a driving force behind recent advances in the capabilities of large language models. While chain-of-thought began as a prompting technique~\citep{nye2021show,cobbe_training_2021,wei2022chain}, CoT-supervised training is now an important component of the post-training pipeline for large language models, and has been found to be highly effective in recent empirical research~\citep{hoLargeLanguageModels2023,chung2022scalinginstructionfinetunedlanguagemodels,ouyang2022traininglanguagemodelsfollow}.

This paper proposes new concepts in statistical learning theory that are aimed at gaining insight into chain-of-thought learning. Consider the following concrete example of chain-of-thought, illustrated in~\Cref{fig:cot_cartoon_example}, to ground the theoretical framework to be developed. The input $\x$ is the sequence \prompt{Which is larger, 8.9 or 8.10?} and the intended answer $\y$ is \answer{8.9}.
When asked to answer directly, earlier systems (e.g., GPT-4) might respond incorrectly~\citep[e.g.,][]{nguyen2023evaluating}. 
However, newer models trained with chain-of-thought supervision will typically first output a CoT $\z$, represented as a sequence of tokens, enabling the model to arrive at the correct answer. For example, the CoT might be, \cotz{Compare the numbers as decimals, with 8.9 written as 8.90 and where 8.10 already has two decimal places. Then compare them digit by digit. The numbers agree in the first digit. But 9 is larger than 1 in the tenths place, so 8.9 is larger than 8.10.}.
At test time, the CoT $\z$ serves as an explanation of the answer. During training, however, the CoT $\z$ plays the role of a natural language description of the execution trace of an algorithm---a step-by-step procedure that is to be learned. In this way, CoT is used as a rich, additional supervised learning signal that goes beyond standard input-output (``end-to-end'') supervision. 


The focus of our theory is to describe how this additional information impacts the statistical complexity of CoT-supervised learning. A key contribution of the paper is to identify a quantity that we call the \textit{chain-of-thought information}, denoted by $\cotinfo(\epsilon; \calH)$. As we show, the CoT information characterizes the statistical complexity of CoT-supervised learning and captures the additional discrimination power granted to the learning algorithm by observing the chain-of-thought.  
In particular, the CoT information governs how the end-to-end error of the learned algorithm scales with the number of CoT training examples. Specifically, in the standard setting of PAC learning for binary classification, the sample complexity scales as $m = \calO\left(d/\epsilon\right)$ in the realizable setting, where $d$ describes the size of the hypothesis space (e.g., the VC dimension), and $\epsilon$ is the target classification error. In contrast, under CoT supervision, we show that the sample complexity scales according to $m = \calO\pparen{d/\cotinfo(\epsilon; \calH)}$.  A case where the chain-of-thought is highly informative will have $\cotinfo(\epsilon; \calH) \gg \epsilon$, which translates into favorable sample complexity. By establishing information-theoretic lower bounds, it is shown that the CoT information thus provides a fundamental measure of the value of this type of non-classical supervision. Because the construction of CoT training sets can be a time-consuming and expensive process, the theoretical framework developed in this paper may ultimately be of practical relevance, contributing to a formal understanding and quantification of the \textit{value} of chain-of-thought supervision.

\aawarning{
A discussion/way to motivate/interpret our results that we can consider incorporating in a future version of the paper:
In general, the error ($\epsilon$) dependence of the sample complexity, appearing in the denominator, reflects the amount of information revealed by each random sample and determines the rate at which the error decays with increasing sample size $m$. For instance, in the noise-free realizable setting, the sample complexity scales as $1/\epsilon$, yielding a $1/n$ error rate. In the agnostic setting with arbitrary noise, it scales as $1/\epsilon^2$, corresponding to a slower $1/\sqrt{n}$ rate. Under structural conditions on the noise---such as those studied by Tsybakov~\citep{mammen1999smooth,tsybakov2004optimal,tsybakov2007fast}, Massart~\citep{massartRiskBoundsStatistical2006}, and others~\citep{bartlett2006empirical}---it is possible to achieves rates interpolating rates of  $1/m^\alpha$ (with a sample complexity of $1/\epsilon^{1/\alpha}$), where $\alpha \in [1/2, 1]$ is a measure of the amount of noise. Here, the amount of noise can be interpreted as the amount of information per sample, and appears via the error dependence of the sample complexity. Analogously, under CoT supervision, where the learner observes more information than just the input-output example pairs, one would expect this increased information to appear in the error dependence of the sample complexity. We show that this effect is captured by the CoT information $\cotinfo(\epsilon; \calH)$, leading to a sample complexity that scales as $1/\cotinfo(\epsilon; \calH)$.
}

\textbf{Organization.} The remainder of the paper is organized as follows.
\Cref{sec:prelims} introduces an abstract model of chain-of-thought supervised learning, together with formal definitions of the learning objective and the key notions of risk, which will be the focus of our investigation.
In~\Cref{sec:cotinfo}, we motivate and introduce the CoT information measure, establish its fundamental properties, and, as an initial pedagogical result, show that it can be used to capture the improved sample complexity of CoT-supervised learning in the setting of finite-cardinality hypothesis classes.
\Cref{sec:upper_bounds} extends this analysis to infinite hypothesis classes, as well as to the agnostic setting where the data are not assumed to be generated by a member of the class. 
In~\Cref{sec:lower_bounds}, two types of information-theoretic lower bounds are established that, together with the upper bounds, lends further support to considering the CoT information as a fundamental characterization of the value of chain-of-thought supervision. 
In~\Cref{sec:simulations}, we present simulation results that numerically compute the CoT information for two examples of CoT hypothesis classes---deterministic finite automata and iterated linear thresholds---and empirically evaluate the sample complexity of learning with end-to-end supervision compared to CoT supervision, finding close alignment with theoretical predictions derived via the CoT information.
\Cref{sec:discussion} concludes with a summary of further extensions that are presented in the appendix, a discussion of related work, and directions for future research that are suggested by the results of this paper.

\comment{
\edit{Preliminary outline, points to touch on, etc.}

[[
One of the keys to the advancement in the reasoning capabilities of modern AI systems has been the so-called chain-of-thought training paradigm, where a model, seeking to learn some complex function or input-output mapping, such as complex mathematical reasoning functionality, is trained with supervision not only from input-output examples, as with standard supervised learning, but also on the step-by-step computational trace from some reference solution or algorithm. This paradigm has enabled significant advancements, enabling machine learning models, typically built on (large) language models, to learn more complex functions and to attain some degree of out-of-distribution generalization.

Chain-of-thought techniques, under this terminology, began as "prompting" methods, where the query given to a language model is modified to prime or induce the model to first generate a step-by-step solution before outputting a final answer. This is achieved either by simply asking the model to generate the solution and "show its work" or by conditioning it to do so via in-context learning techniques. In modern usage, chain-of-thought techniques take a broader meaning beyond prompting, and now form a key component of the post-training pipeline of modern large language models, where the chain-of-thought annotated datasets are procured and the models are explicitly trained on these datasets to learn complex (reasoning) behaviors. This now underpins much of the progress made in the reasoning capabilities of modern LLMs, and forms a central empirical tool.

Empirically, it is known that chain-of-thought training yields a significant statistical advantage, enabling learning of complex functions and behaviors that are otherwise difficult to learn from data (i.e., from input-output samples alone). However, the theoretical statistical foundations of these methods are not well understood.

]]

\begin{itemize}
    \item CoT is important in practice; has been a driving force behind much of the advancement of capabilities of LLMs, particularly in the reasoning domain. Started off as a prompting technique \cite{...} but CoT-supervised training is now a central component of the post-training pipeline for all modern training recipes \cite{...}. Beyond LLMs, CoT-based training techniques have been found to be highly effective for learning complex functions in recent empirical research \cite{...} and have been found to enable a degree of length generalization as well.
    \item An important role of theory is to shed light on the \textit{mechanisms} underlying the success of various empirical methods. Our focus is on characterizing and understanding the \textit{statistical} advantage behind chain-of-thought supervised training, formalizing the driving intuition behind these methods.
    \item Generating chain-of-thought supervised training datasets can be a laborious and expensive process, often requiring human annotators to manually create these datasets. Thus, it is important to develop a formal understanding and quantification of the \textit{value} of chain-of-thought supervision.
    \item Describe main ideas behind our work
    \begin{itemize}
        \item The statistical complexity of learning problems scale with the parameters of the problem. In particular, intuitively, the statistical complexity scales with \aanote*{is this a fair characterization?}{two main attributes}: 1) the complexity of the hypothesis class to be learned, and 2) how informative each sample is with respect to the target objective. For e.g., in the binary classification setting, under the PAC framework, the VC dimension characterizes the scaling of the sample complexity with respect to the complexity/size of the hypothesis space. On the other hand, in the standard supervised setting where samples encode input-output pairs, the sample complexity scales like $1/ \epsilon$ in the realizable setting and $1/ \epsilon^2$ in the agnostic setting.
        \item The CoT supervision setting differs qualitatively/fundamentally from the standard supervised setting in that each sample encodes more information: not only does it contain a labeled input-output example pair, it also includes further supervision about the intermediate computations executed by the reference hypothesis to generate its output. Thus, we focus this work primarily on how the statistical complexity of CoT-supervised learning scales with the error-parameter, capturing how informative each sample is with respect to the target objective.
        \item We identify a specific information measure, termed the \textit{chain-of-thought information} and denoted denoted by $\cotinfo(\epsilon; \calH)$, which characterizes the statistical complexity of CoT-supervised learning and captures the additional discrimination power granted to the learning algorithm by observing the chain-of-thought. 
        \item We establish both upper bound and lower bound results that suggest that the CoT information is a fundamental measure of statistical complexity in the CoT-supervised setting, considering both finite and general hypothesis classes, as well as both the realizable and agnostic settings.  
    \end{itemize}
    \aanote{The ``Modeling Chain-of-Thought as partial observations of algorithmic description'' section in the manuscript includes a few ways to describe what CoT supervision typically looks like. E.g., an input-dependent partial observation of different components of the algorithmic description/execution process.}
\end{itemize}
}


\section{Preliminaries: A Model of Chain-of-Thought Supervised Learning}\label{sec:prelims}

\def\calF{\calH}

The standard statistical learning problem is formulated as the problem of selecting a distinguished member of a function class $\calF: \calX \to \calY$, mapping from an input space $\calX$ to an output space $\calY$. The learner observes a dataset of input-output examples $\sset{(x_i, y_i)}_{i \in [m]}$ and seeks to identify the ground truth function $\fstar \in \calF$ (in the realizable setting) or compete with its closest approximation in $\calF$ (in the agnostic setting). A learning algorithm in the standard (``end-to-end'') setting is a mapping $\calA: (\calX \times \calY)^{*} \to \calY^{\calX}$ from input-output datasets to predictors.

When the target function class $\calF$ is highly complex---such as functions representing multi-step reasoning processes---learning from input-output examples alone can be statistically intractable. To overcome this difficulty, a natural approach is to provide the learner with increased supervision through the step-by-step execution of the target function on the input. To formulate this, we assume that each example observed by the learner includes not only the input $x$ and output $y$, but also an auxiliary observation $z$ that represents information about the function's execution on $x$.

A \textit{chain-of-thought (CoT) hypothesis class} $\calH$ is a family of functions $h: \calX \to \calY \times \calZ$. For each $h \in \calH$, an input $x \in \calX$ yields $h(x)=(y,z)$, where $y \in \calY$ is the output and $z \in \calZ$ is the corresponding CoT. We denote the components of h returning only the output as its end-to-end restriction, $\hete{h}: \calX \to \calY$, and the component returning only the CoT as its CoT restriction, $\hcot{h}: \calX \to \calZ$.
In the chain-of-thought learning setting, the learner observes a dataset $\sset{(x_i, y_i, z_i)}_{i \in [m]}$ and seeks to learn the underlying end-to-end function. A \textit{chain-of-thought learning algorithm} is a mapping $\calA: (\calX \times \calY \times \calZ)^* \to \calY^{\calX}$ from CoT datasets to predictors
\footnote{Generally, a CoT learning algorithm need only output an end-to-end predictor ($\calX \to \calY$), not necessarily the chain-of-thought, though it might produce both.The upper bounds established in this work consider consistency and ERM rules, which output a hypothesis in $\calH$ that returns both the CoT and the output. However, these details in the formulation can be relevant for studying for complex learning algorithms (e.g., improper rules).}.

A key example captured by this framework is autoregressive sequence models, generating the CoT sequentially as $z_t = f(\xs{n}, \zs{t-1})$ until a final output $y = f(\xs{n}, \zs{t(x)})$ is generated. In this case, the spaces $\calX, \calY, \calZ$ would correspond to spaces of variable-length sequences over some vocabulary.
Sequence models like transformers~\citep{vaswani2017attention} are an important way to \textit{implement} such hypothesis classes in a way that allows for CoT supervision. However, the details of any such implementation are not important for our theoretical treatment.


\textbf{\textit{Types of risk.}} It will be crucial to distinguish between two notions of risk: End-to-end risk and the chain-of-thought risk. Let $\calD$ be a distribution over $\calX$. For a reference hypothesis $\hstar \in \calH$ and a predictor $h \in (\calY \times \calZ)^{\calX}$, we define these risks as follows:
\begin{equation*}
    \Lete{\calD}(h) = \probunder{x \sim \calD}{\hete{h}(x) \neq \hete{\hstar}(x)}, \quad 
    \Lcot{\calD}(h) = 
    \probunder{x \sim \calD}{ h(x) \neq \hstar(x)}.
\end{equation*}
That is, $\Lete{\calD}(h)$ is the probability that the predictor's end-to-end \textit{output} is incorrect, whereas $\Lcot{\calD}(h)$ is the probability that \textit{either} the output $\hete{h}(x)$ or the CoT $\hcot{h}(x)$ disagrees with $\hstar$.
A key characteristic of the chain-of-thought supervised learning setting is that the training objective is the CoT loss, whereas the testing evaluation metric is the end-to-end risk. This asymmetry has important information-theoretic implications, which are a main focus of this work.

We now define the chain-of-thought learning problem within a PAC-style framework.
In CoT learning, the learner sees training examples $S= \sset{(\x_i, y_i, \z_i)}_{i \in [m]}$, and the objective is to learn a predictor $\calA(S)$ with low \textit{end-to-end error} $\Lete{\calD}(\calA(S))$ on test inputs. Importantly, errors in predicting the chain-of-thought $\z$ are not penalized at test time, only errors in the final output $y$.

\vskip10pt
\begin{definition}[Realizable chain-of-thought PAC learning]\label{def:realizable_pac_cot_learning}
    $\calH \subset (\calY \times \calZ)^{\calX}$ is CoT-learnable with sample complexity $m_{\calH, \calD}(\epsilon, \delta)$ if there exists an algorithm $\calA: (\calX \times \calY \times \calZ)^* \to \calY^{\calX}$ such that for any distribution $\calD$ over $\calX$ and $\hstar \in \calH$, given $m \geq m_{\calH, \calD}(\epsilon, \delta)$ samples $S = \sset{(\x_i, \hete{\hstar}(\x_i), \hcot{\hstar}(\x_i))}_{i \in [m]}$, $x_1, \ldots, x_m \simiid \calD$, $\calA$ outputs $\calA(S)$ satisfying $\Lete{\calD}(\calA(S)) \leq \epsilon$ with probability at least $1 - \delta$.
\end{definition}

In the \textit{agnostic} setting, the data distribution $\calD$ over $\calX \times \calY \times \calZ$ may not be perfectly realizable by $\calH$. In this case, the goal is to compete with the best end-to-end error in $\calH$.

\vskip10pt
\begin{definition}[Agnostic chain-of-thought PAC learning]\label{def:agnostic_pac_cot_learning}
    $\calH \subset (\calY \times \calZ)^{\calX}$ is agnostically CoT-learnable with sample complexity $m_{\calH, \calD}(\epsilon, \delta)$ if there exists an algorithm $\calA: (\calX \times \calY \times \calZ)^* \to \calY^{\calX}$ such that for any distribution $\calD$ over $\calX \times \calY \times \calZ$, given $m \geq m_{\calH, \calD}(\epsilon, \delta)$ samples $S \sim \calD^m$, $\calA$ outputs $\calA(S)$ satisfying $\Lete{\calD}(\calA(S)) - \inf_{h \in \calH} \Lete{\calD}(h) \leq \epsilon$ with probability at least $1 - \delta$.
\end{definition}

\vskip10pt
\subsection{Interlude: The problem of linking the end-to-end and chain-of-thought risks}\label{ssec:key_idea_control_cot_error}

\begin{table}
    \centering
    \vskip15pt
    \caption{A comparison of analysis techniques for studying learning with chain-of-thought.}\label{table:comparison_table}
    \begin{small}
\begin{tabular}{llc}
\toprule
   Method &  Hypothesis class  & Sample complexity (realizable)  \\
\midrule
\multirow{2}{*}{\text{E2E supervision}}        
    & Finite $\calH$  & $\log \aabs{\calH} / \epsilon$                             \\
\addlinespace
    & General $\calH$ & $\VC(\calLete(\calH))/ \epsilon$                           \\
 \midrule
\multirow{2}{*}{\text{bounding CoT risk}} & Finite $\calH$  & $\log \aabs{\calH} / \epsilon$                             \\
\addlinespace
    & General $\calH$ & $\VC(\calLCoT(\calH))/ \epsilon$                           \\
\midrule
\multirow{2}{*}{\text{using CoT Information}}    
    & Finite $\calH$  & $\log \aabs{\calH} / \cotinfo(\epsilon; \calH)$    \\
\addlinespace 
    & General $\calH$ & $\VC(\calLCoT(\calH)) / \cotinfo(\epsilon; \calH)$ \\
\bottomrule
\end{tabular}
\end{small}
\end{table}


Before introducing the CoT information measure and our main theoretical result, we first motivate a key aspect of the analysis, which is specific to the CoT setting. In CoT learning, the learner observes training examples $S= \sset{(x_i, y_i, z_i)}_{i \in [m]}$ and seeks to identify the \textit{input-output} relationship using information from \textit{both} the output $y_i$ and CoT labels $z_i$. 
That is, although the CoT error is used as a signal during training, only errors in the final output $y$ are penalized at test time. 
Consequently, to derive sharp statistical rates, it is necessary to link the two risk functions precisely.

To explain, recall that standard statistical learning theory characterizes the statistical complexity of learning from input-output examples \textit{without} chain-of-thought supervision.
For example, focusing on the realizable case for clarity, standard results in PAC learning~\citep[e.g.,][]{vapnik:82} show that the sample complexity to obtain end-to-end error $\epsilon$ scales as $d / \epsilon$, where $d$ is a complexity measure such as log-cardinality or the VC dimension of the end-to-end loss class $\calLete(\calH)$.
Intuitively, the $\epsilon$-dependence can be understood in terms of the amount of information per sample, as $\calO(1 / \epsilon)$ samples are required to distinguish between two hypotheses whose outputs disagree on a subset of measure $\epsilon$ in the input space. Matching information-theoretic lower bounds validate that these are the optimal learning rates for the standard E2E-supervised setting.

In the CoT-supervised setting, the learning algorithm potentially has access to more information by observing the CoT, and thus faster rates of convergence are expected. The theoretical challenge lies in capturing this added information as improved rates in the analysis. Standard learning theory results cannot be directly applied to the CoT setting due to the mismatch between the training objective and the evaluation metric. One approach to address this challenge, which is taken by~\citet{joshi2025theorylearningautoregressivechain}, is to side-step this asymmetry by noting that the end-to-end error is always upper bounded by the CoT error, with $\Lete{\calD}(h) \leq \Lcot{\calD}(h)$, and to instead establish a guarantee on the CoT risk, allowing the use of standard results in learning theory. In particular, one can define the CoT loss class for the hypothesis class $\calH$ as a function class over $\calX \times \calY \times \calZ$ according to
\[\calLCoT(\calH) = \set{\ellcot_h: (x, y, z) \mapsto \Ind{h(x) \neq (y, z)} \given h \in \calH}.\]
Then, appealing to standard results in PAC learning \citep[e.g., Vapnik's ``General Learning'' framework][]{vapnik:82}, one can learn $\calH$ to obtain a \textit{CoT risk} of $\epsilon$ with a sample complexity $m(\epsilon) = \calO(\VC(\calLCoT(\calH)) /\epsilon)$, which in turn guarantees that the end-to-end risk is also bounded by~$\epsilon$.

This method of analysis leads to a sample complexity with the same $1 / \epsilon$ rate that we see in the end-to-end supervision setting, despite the increased amount of information per sample. In particular, this does not imply improved sample complexity over standard end-to-end supervision in the case of finite-cardinality classes (c.f. \Cref{table:comparison_table}).
In the general case, improved sample complexity hinges on whether or not the inequality $\VC(\calLCoT(\calH)) \ll \VC(\calLete(\calH))$ holds, which is \textit{a priori} unclear, even if it is possible to construct artificial classes for which this holds \citep{joshi2025theorylearningautoregressivechain}. This suboptimality stems from the fact that this approach does not distinguish between the two types of risk and does not explicitly measure the amount of information encoded in the chain-of-thought. As a consequence, this approach can not achieve matching information-theoretic lower bounds. Moreover, it is unclear whether it is meaningful to apply this type of analysis to the agnostic setting, where the distribution over input-output-CoT examples is not realizable by the CoT hypothesis class.



\section{Key Idea: The CoT Information Measure}\label{sec:cotinfo}


We now describe a new approach that explicitly accounts for the additional information provided in the CoT supervision for distinguishing between hypotheses with different end-to-end behaviors. The central quantity in this analysis is the \textit{CoT information}, defined below.

\begin{definition}[CoT information]\label{def:cotinfo}
    For a CoT hypothesis class $\calH \subset (\calY \times \calZ)^{\calX}$ and distribution $\calD$ over $\calX$, we define the CoT information measures as follows:
    \begin{align*}
      \relcotinfo(h_1, h_2)
      &= - \log \probunder{\x \sim \calD}{\hcot{h_1}(\x) = \hcot{h_2}(\x), \hete{h_1}(\x) = \hete{h_2}(\x)} \\
      \cotinfo(\epsilon; \calH) &= \inf_{h \in \Deltaete_{\calD}(\epsilon; \calH, \hstar)} \relcotinfo(\hstar, h).
    \end{align*}
    where the infimum is over $\Deltaete_{\calD}(\epsilon; \calH, \hstar)$, the set of hypotheses that disagree with the  end-to-end behavior (i.e., output) of $\hstar$ with probability at least $\epsilon$,
    \[\Deltaete_{\calD}(\epsilon; \calH, \hstar) := \set{h \in \calH: \probunder{\x \sim \calD}{\hete{\hstar}(\x) \neq \hete{h}(\x)} > \epsilon}.\]
\end{definition}

The relative CoT information between two hypotheses $\relcotinfo(h_1, h_2)$ quantifies how effectively the observed CoT behavior distinguishes the two hypotheses. In particular, the probability $\pprob{\hcot{h_1}(x) = \hcot{h_2}(x), \, \hete{h_1}(x) = \hete{h_2}(x)} \in (0,1)$ represents the proportion of inputs on which a pair of hypotheses have matching behavior on \textit{both} the CoT and the end-to-end output, rendering them indistinguishable from these observations.
The relative CoT information between a pair of hypotheses is the negative logarithm of this probability; thus, $\relcotinfo(h_1, h_2)$ takes values in $[0, \infty)$.

The CoT information of a \textit{hypothesis class} $\calH$, relative to the reference hypothesis $\hstar$, is a function of the error level $\epsilon$, denoted $\cotinfo(\epsilon; \calH)$. It is defined as the minimal relative CoT information between $\hstar$ and every alternative hypothesis $h \in \Deltaete_\calD(\epsilon; \calH, \hstar)$ which disagrees with $\hstar$'s end-to-end output on more than an $\epsilon$ fraction of the inputs. 
A large $\cotinfo(\epsilon; \calH)$ thus ensures high distinguishability (via CoT) between $\hstar$ and any such "bad" alternative. 

A primary message of this work is that the CoT information characterizes the $\epsilon$-dependence of sample complexity in Chain-of-Thought supervised learning by quantifying the informativeness of CoT supervision. The CoT information can be much larger than $\epsilon$, yielding rapid learning under CoT supervision. The intuition is that when two hypotheses differ in terms of their end-to-end behavior, even with small probability, they will typically differ in terms of their computational traces (i.e., CoT) with high probability. Consequently, CoT supervision allows these differing hypotheses to be distinguished far more rapidly than by observing input-output samples alone.

\subsection{Properties of the CoT information}\label{ssec:cot_properties}

The following result outlines key properties of the CoT information measure. Among these, the property $\cotinfo(\epsilon; \calH) \geq \epsilon$ is particularly important. As will be demonstrated, this implies that, in the realizable setting, CoT supervision is never detrimental, information-theoretically. The proof of these properties is given in~\Cref{sec:proofs_properties}.
\begin{lemma}\label{lemma:cotinfo_props}
  Let $\calH \subset (\calY \times \calZ)^{\calX}$ be a CoT hypothesis class. Then the CoT information  $\cotinfo(\epsilon; \calH)$ satisfies the following properties:
  \begin{enumerate}
    \item  $\cotinfo(\epsilon; \calH) \geq \epsilon$.
    \item $\cotinfo(\epsilon; \calH)$ is monotonically increasing in $\epsilon$.
    \item $\cotinfo(\epsilon; \calH)$ is monotonically decreasing in $\calH$ (under the subset relation).
  \end{enumerate}
\end{lemma}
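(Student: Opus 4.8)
The plan is to verify each of the three properties directly from \Cref{def:cotinfo}, holding the reference hypothesis $\hstar \in \calH$ fixed throughout (every claim is understood relative to such an $\hstar$). All three are structural consequences of how the definition is set up, so the argument will be short; the only quantitatively substantive one is Property~1.

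For Property~1 I would begin from the observation that the event whose probability appears inside $\relcotinfo(\hstar, h)$, namely $\{\hcot{h}(\x) = \hcot{\hstar}(\x),\ \hete{h}(\x) = \hete{\hstar}(\x)\}$, is contained in the event $\{\hete{h}(\x) = \hete{\hstar}(\x)\}$. Hence for any $h \in \Deltaete_\calD(\epsilon; \calH, \hstar)$,
\[ \probunder{\x \sim \calD}{\hcot{h}(\x) = \hcot{\hstar}(\x),\ \hete{h}(\x) = \hete{\hstar}(\x)} \;\le\; \probunder{\x \sim \calD}{\hete{h}(\x) = \hete{\hstar}(\x)} \;<\; 1 - \epsilon, \]
where the strict inequality is exactly the membership condition defining $\Deltaete_\calD(\epsilon; \calH, \hstar)$. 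Applying $-\log(\cdot)$ and the elementary bound $-\log(1-t) \ge t$ for $t \in [0,1)$ (valid when $\log = \ln$, and for any base in $(1,e]$) gives $\relcotinfo(\hstar, h) > -\log(1-\epsilon) \ge \epsilon$, and taking the infimum over $h \in \Deltaete_\calD(\epsilon; \calH, \hstar)$ yields $\cotinfo(\epsilon; \calH) \ge \epsilon$; the degenerate case $\Deltaete_\calD(\epsilon; \calH, \hstar) = \emptyset$ is covered by the convention $\inf \emptyset = +\infty$.

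For Properties~2 and~3 I would use a single template: $\cotinfo(\epsilon; \calH)$ is the infimum of the fixed functional $h \mapsto \relcotinfo(\hstar, h)$ over the index set $\Deltaete_\calD(\epsilon; \calH, \hstar)$, so it suffices to track how that set varies. For Property~2, the defining inequality $\probunder{\x \sim \calD}{\hete{\hstar}(\x) \ne \hete{h}(\x)} > \epsilon$ becomes more restrictive as $\epsilon$ increases, so $\epsilon_1 \le \epsilon_2$ implies $\Deltaete_\calD(\epsilon_2; \calH, \hstar) \subseteq \Deltaete_\calD(\epsilon_1; \calH, \hstar)$, and an infimum over a smaller set is at least as large, giving $\cotinfo(\epsilon_1; \calH) \le \cotinfo(\epsilon_2; \calH)$. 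For Property~3, the membership condition for $\Deltaete_\calD(\epsilon; \cdot, \hstar)$ refers only to $h$ and $\hstar$ and not to the ambient class, so $\calH_1 \subseteq \calH_2$ (with $\hstar \in \calH_1$, hence $\hstar \in \calH_2$) gives $\Deltaete_\calD(\epsilon; \calH_1, \hstar) \subseteq \Deltaete_\calD(\epsilon; \calH_2, \hstar)$; an infimum over a larger set is no larger, so $\cotinfo(\epsilon; \calH_2) \le \cotinfo(\epsilon; \calH_1)$.

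I do not anticipate a genuine obstacle here: everything is a direct unwinding of the definitions, and the ``hard part'' is purely expository, namely being clear that Property~1 is the load-bearing statement (it is what later guarantees that CoT supervision is never information-theoretically harmful in the realizable setting, since $\cotinfo(\epsilon; \calH) \ge \epsilon$ turns a $d/\cotinfo(\epsilon; \calH)$ rate into something no worse than $d/\epsilon$), while Properties~2 and~3 are monotonicities recorded for use in later sections. The two spots worth a sentence of care are the inequality $-\log(1-t) \ge t$ invoked in Property~1 and the vacuous case where there are no $\epsilon$-bad alternatives, both of which are handled as indicated above.
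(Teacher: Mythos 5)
Your proposal is correct and follows essentially the same route as the paper's proof: event containment to reduce the joint (CoT, E2E) agreement probability to the E2E agreement probability, the bound $-\log(1-t)\ge t$ for Property~1, and monotonicity of the index set $\Deltaete_{\calD}(\epsilon;\calH,\hstar)$ in $\epsilon$ and in $\calH$ for Properties~2 and~3. The only cosmetic difference is that the paper first records the pairwise inequality $\relcotinfo(h_1,h_2)\ge \probunder{\x\sim\calD}{\hete{h_1}(\x)\neq\hete{h_2}(\x)}$ for arbitrary pairs and then specializes, whereas you argue directly over $\Deltaete_{\calD}(\epsilon;\calH,\hstar)$; your explicit handling of the vacuous case via $\inf\emptyset=+\infty$ is a small point the paper leaves implicit.
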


Before proceeding with bounding sample complexity in terms of CoT information, we note how the measure behaves in extreme boundary conditions.  First, let us consider an example where the CoT annotations are entirely independent of the end-to-end behavior. In particular, consider a CoT hypothesis class with a product structure $\calH = \calF^{\CoT} \times \calF^{\ete}$, where $\calF^{\CoT} \subset \calZ^{\calX}, \calF^{\ete} \subset \calY^{\calX}$. In this case, we would expect no statistical advantage from observing the CoT---this is captured by the CoT information measure, which coincides with the ``end-to-end information'' in this case. At the other extreme, consider the case where the CoT from \textit{any} single example reveals the entire target function.
For example, let $\calF \subset \calY^{\calX}$ and consider the CoT hypothesis class $\calH = \sset{h_f: x \mapsto (f, f(x)) \,:\, f \in \calF}$. In this case, $\cotinfo(\epsilon; \calH) = \infty$, which corresponds to the fact that a single example is sufficient to attain zero error. 

Finally, consider the problem of learning a regular language with CoT supervision.
Here, we take the output $y$ to indicate whether or not the string $\x$ is in the language, and we let 
$\z$ be the sequence of states visited in a DFA representing the language as it processes $\x$. In~\Cref{sec:simulations}, we study this example in the context of our theory via empirical simulation. Additionally,~\Cref{sec:examples_appendix} provides a more detailed discussion of the above illustrative examples.

\subsection{Improved sample complexity via CoT information}\label{ssec:warmup}

To illustrate the main ideas and intuitions underpinning this paper's results, we next prove a sample complexity bound for CoT-supervised learning with \textit{finite} hypothesis classes in the \textit{realizable} setting. While other proofs are deferred to the appendix for clarity and brevity, this particular result is proven here due to its simplicity and pedagogical value.

The learning rule we consider is \textit{chain-of-thought consistency}, $\CoTCons(S; \calH)$: given a sample $S$, the learner returns any hypothesis in $\calH$ which is consistent with the sample $S = \sset{(x_i, y_i, z_i)}_{i \in [m]}$ in terms of both outputs and the chain-of-thought.

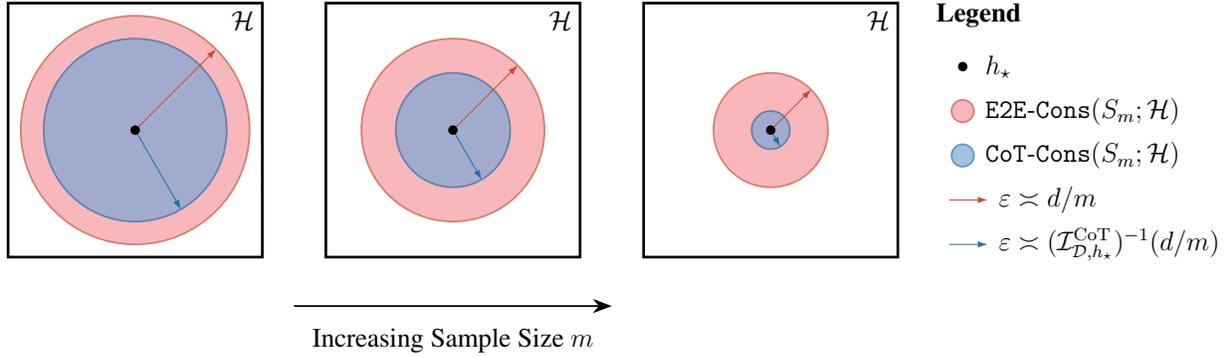
\begin{figure}[t]
    \centering
    \resizebox{\linewidth}{!}{
    \begin{tikzpicture}[
    scale=1.0, transform shape
  ]
    \tikzset{
      hypothesis/.style={draw, very thick, minimum size=4cm}, 
      hstar/.style={fill, circle, inner sep=1.5pt},           
      e2econs/.style={draw, thick, Red!80!black, fill=OrangeRed!50, opacity=0.7}, 
      cotcons/.style={draw, thick, RoyalBlue!80!black, fill=RoyalBlue!50, opacity=0.7}, 
      e2e_radiusline/.style={
          draw=Red!80!black, 
          thin,
          -{Latex[length=1.5mm, width=1mm]} 
          },
      cot_radiusline/.style={
          draw=RoyalBlue!80!black, 
          -{Latex[length=1.5mm, width=1mm]} 
          }
    }
    
    \def\Hsize{2}
    \def\BaseRadius{1.8}
    \def\PanelShift{5cm}
    \def\LegendXShift{1cm}
    \def\LegendYShift{1cm}
    \def\LegendItemSep{0.7cm}
    \def\LegendCircleRadius{5pt}
    \def\LegendTextOffset{6pt}
    \def\RadiusAngleEtwoE{45} 
    \def\RadiusAngleCoT{-60}
    \def\LegendLineLength{0.6cm} 

  \begin{scope}[shift={(0,0)}]
    \node[hypothesis, label={[anchor=north east]north east:$\calH$}] (H1) at (0,0) {};
    \def\E2ERadiusOne{\BaseRadius/1}
    \def\CoTRadiusOne{\BaseRadius/1.25}
    \path[e2econs] (0,0) circle (\E2ERadiusOne);
    \path[cotcons] (0,0) circle (\CoTRadiusOne);
    \draw[e2e_radiusline] (0,0) -- ($(0,0) + (\RadiusAngleEtwoE:\E2ERadiusOne)$); 
    \draw[cot_radiusline] (0,0) -- ($(0,0) + (\RadiusAngleCoT:\CoTRadiusOne)$); 
    \node[hstar] (hstar1) at (0,0) {};
  \end{scope}

  \begin{scope}[shift={(\PanelShift,0)}]
    \node[hypothesis, label={[anchor=north east]north east:$\calH$}] (H2) at (0,0) {};
    \def\E2ERadiusTwo{\BaseRadius/1.25}
    \def\CoTRadiusTwo{\BaseRadius/2}
    \path[e2econs] (0,0) circle (\E2ERadiusTwo);
    \path[cotcons] (0,0) circle (\CoTRadiusTwo);
    \draw[e2e_radiusline] (0,0) -- ($(0,0) + (\RadiusAngleEtwoE:\E2ERadiusTwo)$); 
    \draw[cot_radiusline] (0,0) -- ($(0,0) + (\RadiusAngleCoT:\CoTRadiusTwo)$); 
    \node[hstar] (hstar2) at (0,0) {};
  \end{scope}

  \begin{scope}[shift={(2*\PanelShift,0)}]
    \node[hypothesis, label={[anchor=north east]north east:$\calH$}] (H3) at (0,0) {};
    \def\E2ERadiusFour{\BaseRadius/2}
    \def\CoTRadiusFour{\BaseRadius/6}
    \path[e2econs] (0,0) circle (\E2ERadiusFour);
    \path[cotcons] (0,0) circle (\CoTRadiusFour);
    \draw[e2e_radiusline] (0,0) -- ($(0,0) + (\RadiusAngleEtwoE:\E2ERadiusFour)$); 
    \draw[cot_radiusline] (0,0) -- ($(0,0) + (\RadiusAngleCoT:\CoTRadiusFour)$); 
    \node[hstar] (hstar3) at (0,0) {};
  \end{scope}

  \coordinate (legend_anchor) at ($(H3.east) + (\LegendXShift, \LegendYShift)$);

  \coordinate (hstar_item_pos) at (legend_anchor);
  \node[hstar] at (hstar_item_pos) {};
  \node[right=\LegendTextOffset of hstar_item_pos, anchor=west, black] (hstar_text) {$\hstar$};

  \coordinate (e2e_item_pos) at ([yshift=-\LegendItemSep]hstar_item_pos);
  \path[e2econs] (e2e_item_pos) circle (\LegendCircleRadius);
  \node[right=\LegendTextOffset of e2e_item_pos, anchor=west, black] (e2e_text) {$\EtECons(S_m; \calH)$};

  \coordinate (cot_item_pos) at ([yshift=-\LegendItemSep]e2e_item_pos);
  \path[cotcons] (cot_item_pos) circle (\LegendCircleRadius);
  \node[right=\LegendTextOffset of cot_item_pos, anchor=west, black] (cot_text) {$\CoTCons(S_m; \calH)$};

  \coordinate (e2e_radius_item_pos) at ([yshift=-\LegendItemSep, xshift=-6pt]cot_item_pos);
  \draw[e2e_radiusline] (e2e_radius_item_pos) -- ++(\LegendLineLength, 0);
  \node[right=18pt of e2e_radius_item_pos, anchor=west, black] (e2e_radius_text) {$\epsilon \asymp d/m$};

  \coordinate (cot_radius_item_pos) at ([yshift=-\LegendItemSep]e2e_radius_item_pos);
  \draw[cot_radiusline] (cot_radius_item_pos) -- ++(\LegendLineLength, 0);
  \node[right=18pt of cot_radius_item_pos, anchor=west, black] (cot_radius_text) {$\epsilon \asymp (\cotinfo)^{-1}(d/m)$};

  \node[anchor=south] at ([yshift=0.15cm, xshift=0.25cm]hstar_text.north -| hstar_item_pos) {\textbf{Legend}};

  \draw [-{Stealth[length=3mm]}, thick] ($(H1.south) + (\PanelShift*0.5, -0.75cm)$) -- ($(H3.south) + (-\PanelShift*0.5, -0.75cm)$)
        node[midway, below=2mm] {Increasing Sample Size $m$};

\end{tikzpicture}
    }
    \caption{
    Illustration of the statistical advantage of CoT supervision in terms of the geometry of the CoT consistency rule with respect to \textit{end-to-end} error. CoT supervision enables the construction of a tighter consistency set, when the CoT is informative (i.e.,, $\cotinfo(\epsilon; \calH) > \epsilon$), which leads to smaller end-to-end error and more sample-efficient learning.
    }\label{fig:cotcons_diagram}
\end{figure}


\begin{result}[Learning with Chain-of-Thought Supervision]\label{result:cotinfo_ete_learning}
    Let $\calH \subset (\calY \times \calZ)^{\calX}$ be a finite CoT hypothesis class. For any distribution $\calD$ over $\calX \times \calY \times \calZ$ realized by some $\hstar \in \calH$, the CoT consistency learning rule has a sample complexity of
    \begin{equation*}
      m(\epsilon, \delta) = \frac{\log \abs{\calH} + \log(1 / \delta)}{\cotinfo(\epsilon; \calH)}.
    \end{equation*}
    That is, for any $m \geq m(\epsilon, \delta)$, with probability at least $1 - \delta$ over $S \sim \calD^m$, 
    any hypothesis $h$ that is CoT consistent on $S$ will have end-to-end risk satisfying $\Lete{\calD}(h) \leq \epsilon$.

  \end{result}
  \vskip-15pt
  \begin{proof}
    We aim to bound the probability of the ``bad event''
    \[\sset{\exists h \in \calH : \Lete{\calD}(h) > \epsilon,\, \empiricalcotrisk{S}(h) = 0}\]
    over the draw of $(x_1, \ldots, x_m) \simiid \calD$. We highlight that the training loss is the empirical \textit{CoT} risk, $\empiricalcotrisk{S}(h)$, whereas the test metric is the \textit{end-to-end} risk $\Lete{\calD}(h)$.
    
    Fix any $h \in \calH$ with end-to-end error larger than $\epsilon$, $\Lete{\calD}(h) = \pprobunder{x \sim \calD}{\hete{h}(x) \neq \ete(\hstar)(x)} > \epsilon$ (i.e., $h \in \Deltaete_{\calD}(\epsilon; \calH, \hstar)$). We bound the probability that $h$ is CoT consistent on $S$, $h \in \CoTCons(S; \calH) = \sset{h \in \calH : \empiricalcotrisk{S}(h) = 0}$, as follows
    \begin{align*}
      \probunder{S \sim \calD^{\otimes m}}{h \in \CoTCons(S; \calH)} &= \probunder{S \sim \calD^{\otimes m}}{\forall i, \hcot{h}(x_i) = \hcot{\hstar}(x_i), \hete{h}(x_i) = \hete(\hstar)(x_i)} \\
      &= \probunder{x \sim \calD}{\hcot{h}(x) = \hcot{\hstar}(x),\, \hete{h}(x) = \hete{\hstar}(x)}^m \\
      &\stepa{=} \paren{\exp\paren{- \relcotinfo(\hstar, h)}}^m \\
      &\stepb{\leq} \exp\paren{-m \cdot \cotinfo(\epsilon; \calH)},
    \end{align*}
    where step (a) is by the definition of the relative CoT information between a pair of hypotheses, and step (b) is by the definition of $\cotinfo(\epsilon; \calH)$ and the fact that $h \in \Deltaete_{\calD}(\epsilon; \calH, \hstar)$.

    Choosing $m = \frac{\log \aabs{\calH} + \log (1/\delta)}{\cotinfo(\epsilon; \calH)}$ implies that for each hypothesis $h \in \Deltaete_{\calD}(\epsilon; \calH, \hstar)$ with end-to-end error larger than $\epsilon$, the probability that it is in the CoT consistency set is bounded by
    \[\probunder{S \sim \calD^{\otimes m}}{h \in \CoTCons(S; \calH)} \leq \frac{\delta}{\abs{\calH}}.\]
    Applying a union bound over $\calH$ yields
    \[\probunder{S \sim \calD^{\otimes m}}{\exists h \in \calH : \Lete{\calD}(h) > \epsilon,\, \empiricalcotrisk{S}(h) = 0} \leq \delta\]
    to complete the proof.
\end{proof}

This result demonstrates that, for CoT learning, the $\epsilon$-dependence of the sample complexity is $\bigO(1 / \cotinfo(\epsilon; \calH))$, contrasting with the typical rate of $\bigO(1 / \epsilon)$. Intuitively, the ratio $\cotinfo(\epsilon; \calH) / \epsilon \geq 1$ quantifies the relative value of a CoT training example compared with an end-to-end training example.

\par

\section{Guarantees for CoT-Supervised Learning: Upper Bounds}\label{sec:upper_bounds}
This section extends our exploration of statistical upper bounds to infinite hypothesis classes and the agnostic learning setting, thereby further elucidating the statistical advantage of CoT supervision.

\subsection{The realizable setting: Extension to infinite classes}

\Cref{result:cotinfo_ete_learning} established a sample complexity bound determined by two key factors: the term $1 / \cotinfo(\epsilon; \calH)$, which captures the information per CoT-supervised sample, and the log-cardinality of the class, $\log \abs{\calH}$, which reflects its size or dimension. We now extend this result to infinite classes, replacing the log-cardinality term with the VC dimension of the CoT loss class. As before, the upper bound is achieved by the CoT consistency learning rule.


\begin{result}[Learning infinite classes under CoT supervision]\label{result:cotcons_cotinfo_infH}
    Let $\calH \subset (\calZ \times \calY)^{\calX}$ be a CoT hypothesis class. For any distribution $\calD$ over $\calX \times \calY \times \calZ$ realized by some $\hstar \in \calH$, the CoT consistency learning rule has a sample complexity of
    \[m(\epsilon, \delta) =  \bigO\paren{\Bigparen{\frac{1}{\cotinfo(\epsilon; \calH, \hstar)} + 1} \Bigparen{ \VC(\calLCoT(\calH)) \cdot \log \Bigparen{\frac{1}{\cotinfo(\epsilon; \calH)} + 1} + \log(1/\delta)}}.\]
    That is, for any $m \geq m(\epsilon, \delta)$, with probability at least $1 - \delta$ over $S \sim \calD^m$, 
    any hypothesis $h$ that is CoT consistent on $S$ will have end-to-end risk satisfying $\Lete{\calD}(h) \leq \epsilon$.
\end{result}
\vskip10pt

The proof is provided in~\Cref{ssec:proof:result:cotcons_cotinfo_infH}. The result follows from a lemma that relates the CoT risk of any \textit{proper} CoT learning rule (i.e., one that returns a predictor in the hypothesis class) to its performance with respect to the end-to-end error.

We now contrast our result with the alternative approach in~\Cref{ssec:key_idea_control_cot_error}, which bounds the CoT risk directly (see the second row of~\Cref{table:comparison_table}), as in~\citet{joshi2025theorylearningautoregressivechain}.  Both analyses use the VC dimension of the CoT loss class to quantify hypothesis class complexity. However, our approach achieves sharper learning rates by improving the $\epsilon$-dependence via CoT information, which reflects the greater information content of each annotated sample.  In particular, the work of~\citet{joshi2025theorylearningautoregressivechain} establishes bounds on the VC dimension of the CoT loss class for hypothesis classes with a specific autoregressive structure---these bounds can be combined with~\Cref{result:cotcons_cotinfo_infH} to establish improved sample complexity bounds for these autoregressive hypothesis classes.  




\subsection{The agnostic setting}

The previous results assume that the data distribution $\calD$ over $\calX \times \calY \times \calZ$ is \textit{realizable} by the CoT hypothesis class $\calH$. Such an assumption can be stringent, particularly in the presence of noise. This section, therefore, addresses the \textit{agnostic} setting, where no restriction is made on the distribution; the goal, instead, is to compete with the best hypothesis in the class $\calH$ in terms of \textit{end-to-end risk}.

In the agnostic setting, a natural learning rule is \textit{CoT empirical risk minimization}, which selects a hypothesis that minimizes the \textit{empirical CoT risk}: 
$\CoTERM(S; \calH) = \argmin_{h \in \calH} \empiricalcotrisk{S}(h)$.

Recall that CoT supervision never hurts in the realizable setting since $\cotinfo(\epsilon; \calH) \geq \epsilon$ for any hypothesis class. The picture is more complicated in the agnostic setting. In particular, CoT supervision can be harmful or distracting, and discarding the CoT annotation and learning from only the input-output examples can be preferable, as the following example shows. The issue arises when the CoT hypothesis class $\calH$ is not aligned with the data distribution, especially when $\calH$ can fit the end-to-end behavior but not the CoT behavior.

\begin{example*}
    Consider a CoT hypothesis class $\calH : \calX \to \calY \times \calZ$ and suppose $\calD$ is a distribution over $\calX \times \calY \times \calZ$ for which the output component is realizable by $\calH$ but the CoT component is not realizable. In particular, it is easy to construct examples for which $\inf_{h \in \calH} \Lete{\calD}(h) = 0$ while $\inf_{h \in \calH} \Lcot{\calD}(h) = 1$. Clearly, in such cases, the CoT-ERM learning rule provides no guarantees whatsoever since $\CoTERM(S; \calH) = \calH$ for any $S$ supported by $\calD$. In contrast, E2E-ERM enjoys the standard PAC learning guarantees, with a sample complexity $\calO\left(\sfrac{1}{\epsilon} \cdot \VC(\calLete(\calH))\right)$.
\end{example*}


Thus, CoT supervised learning in the agnostic setting requires a different notion of CoT information, which captures how well-aligned the data distribution is to the hypothesis class, and whether fitting the CoT aligns with fitting the end-to-end behavior. This is defined in the following result, which extends our results to the agnostic setting.

\begin{result}[Agnostic learning under CoT supervision]\label{result:coterm_agnostic}
    Let $\calH \subset (\calY \times \calZ)^{\calX}$ be a CoT hypothesis class. For any distribution $\calD$ over $\calX \times \calY \times \calZ$, the CoT-ERM learning rule has a sample complexity of
    \[m(\epsilon, \delta) =  \calO \paren{\frac{\VC(\calLCoT(\calH)) + \log(1/\delta)}{\cotinfoag(\epsilon; \calH)^2}},\]
    where $\cotinfoag(\epsilon; \calH)$, the agnostic CoT information, is defined via excess risks as
    \[\cotinfoag(\epsilon; \calH) := \inf\set{\Lcot{\calD}(h) - \Lstarcot: h \in \calH, \Lete{\calD}(h) - \Lstarete \geq \epsilon},\]
    where $\Lstarcot := \inf_{h \in \calH} \Lcot{\calD}(h)$ and $\Lstarete := \inf_{h \in \calH} \Lete{\calD}(h)$.
    That is, for any $m \geq m(\epsilon, \delta)$, with probability at least $1 - \delta$ over the draw of $S \sim \calD^m$, the excess end-to-end risk is bounded as $\Lete{\calD}(h) \leq \Lstarete + \epsilon$ 
    for any minimizer of the CoT empirical risk.
\end{result}
\vskip10pt

The proof is presented in~\Cref{ssec:proof:result:coterm_agnostic}. Note that, unlike in the realizable case, we do not necessarily have the lower bound $\cotinfoag(\epsilon; \calH) \geq \epsilon$. For instance, in the motivating example above, $\cotinfoag(\epsilon; \calH) = 0$. However, CoT supervision yields an advantage whenever $\Lcot{\calD}(h) - \Lstarcot > \Lete{\calD}(h) - \Lstarete$.


\section{Information Theoretic Lower Bounds for CoT Supervised Learning}\label{sec:lower_bounds}

This section establishes information-theoretic lower bounds on sample complexity, further validating the CoT information $\cotinfo(\epsilon; \calH)$ as a fundamental measure of statistical complexity for learning with CoT supervision.
In general, the statistical complexity of a learning problem depends on several parameters, including the size or complexity of the hypothesis class (e.g., $\VC(\calH)$ in binary classification) and the error parameter (e.g., $1/\epsilon$ or $1/\epsilon^2$ for the realizable and agnostic settings, respectively). Different types of lower bounds scale accordingly with one or both of these factors. Our main focus in this work is on the dependence of the sample complexity on the error parameter, which corresponds to the amount of information encoded in the CoT supervision for discriminating between hypotheses with different end-to-end behavior.

We begin with a lower bound demonstrating that CoT information $\cotinfo(\epsilon; \calH)$ characterizes the $\epsilon$ dependence of sample complexity. The essence of the result is to lower bound the minimum number of samples needed to distinguish a pair of hypotheses with a given end-to-end disagreement, reducing the learning problem to a binary hypothesis testing problem~\citep{lecamConvergenceEstimatesDimensionality1973}, and relating the total variation distance between distributions over $\calX \times \calY \times \calZ$ induced by a pair of hypotheses to the relative CoT information between them. 

\begin{result}[Lower bound via CoT information]\label{result:cotinfo_lowerbound_twopoint}
  Let $\calH \subset (\calY \times \calZ)^{\calX}$ be a CoT hypothesis class and let $\calD$ be a distribution on $\calX$. Let $\x_1, \ldots, \x_m \sim \calD$ be an i.i.d sample from $\calD$. For any $\hstar \in \calH$ and $\epsilon > 0$, if the sample size satisfies
  \[m < \frac{\log (1/\delta)}{\cotinfo(\epsilon; \calH)}\]
  then with probability at least $\delta$, there exists $h \in \calH$ with end-to-end error at least $\epsilon$ which is indistinguishable from $\hstar$ on the sample. Moreover, the expected error of any algorithm $\calA$ satisfies
    \begin{align*}
        \sup_{\hstar \in \calH} \expectunder{S \sim P_{\hstar}^{\otimes m}}{\Lete{\calD, \hstar}(\calA(S))} 
        &\geq \frac{1}{2} \sup_{\substack{\hstar \in \calH \\ \epsilon > 0}} \epsilon \cdot \exp(-m \cdot \cotinfo(\epsilon; \calH)).
    \end{align*}
\end{result}



This result validates the CoT information as characterizing the $\epsilon$-dependence of the rate. However, a weakness of two-point methods is that they do not scale with the size of the hypothesis space. The following result addresses this by reducing the learning problem to that of testing \textit{multiple} hypotheses, 
using a packing of the hypothesis space with respect to the \textit{end-to-end error}. We then use Fano's inequality to lower bound the probability of error in terms of a mutual information,
and relate this mutual information to the CoT information. To apply Fano's method in this way, we extend the framework to allow the observed $\z$ to be a stochastic function of the 
hypothesis CoT.

\begin{result}[Lower bound via Fano's method]\label{result:lower_bound_fano}
    Let $\calH \subset (\calY \times \calZ)^{\calX}$ be a CoT  hypothesis class and let $\calD$ be a distribution over $\calX$. Suppose that $x_1, \ldots, x_m \sim \calD$. Let $Q \in \calP(\calY \times \calZ \ggiven \calY \times \calZ)$ be a noisy channel from $h(x) = (y, z)$ to observations $\bar{y}, \bar{z}$. Let $C_Q = \max_{a,b} \KL{Q(\cdot \ggiven a)}{Q(\cdot \ggiven b)}$ be the capacity factor of the channel. The learner observes the noisy sample $S = \sset{(x_i, \bar{y}_i, \bar{z}_i)}_{i=1}^{m}$. Define the pseudo-metric $\dete(h_1, h_2) = \pprobunder{x}{\hete{h_1}(x) \neq \hete{h_2}(x)}$, and let $M(\epsilon; \calH, \dete)$ be the $\epsilon$-packing number of $\calH$ with respect to this pseudo-metric. 
    Then, for any algorithm $\calA$ observing the CoT supervised sample $S$ of size $m$, we have that 
    \[m \leq \frac{\log M(\calH, \dete, \epsilon)}{2 \cdot \paren{C_Q \cdot  \sup_\pi \expectunder{h_1, h_2 \sim \pi}{\relcotinfo(h_1, h_2)} + \log 2}},\]
    implies large error for some $\hstar \in \calH$ with high probability, i.e.
    \begin{equation*}
        \sup_{\hstar \in \calH} \probunder{S \sim P_{\hstar}^{\otimes m}}{\Lete{\calD, \hstar}(\calA(S)) \geq \frac{\epsilon}{2}} \geq 1/2.
    \end{equation*}
\end{result}

Here, $C_Q$ is a bound on the capacity of the channel that adds noise to the chain-of-thought. This lower bound relates the probability of large error to the CoT information measure, as with the previous result, but also scales with the size of the hypothesis space, as measured by its packing number. Additionally, the result also models noise in the learning process by observing the CoT label outputs through a noisy channel, which is important in the context of CoT learning, where the CoT labels are often manually created by human annotators in an error-prone process.
The proofs of both lower bound results are presented in~\Cref{sec:proofs_lower_bounds}, together with further discussion.

\section{Simulations}\label{sec:simulations}

This section presents numerical simulations exploring the CoT information measure for simple CoT hypothesis classes and its ability to predict sample complexity gains from CoT-supervised learning.

\begin{figure*}[ht!]
    \centering
    \begin{subfigure}[t]{0.475\textwidth}
        \centering
        \includegraphics[width=\linewidth]{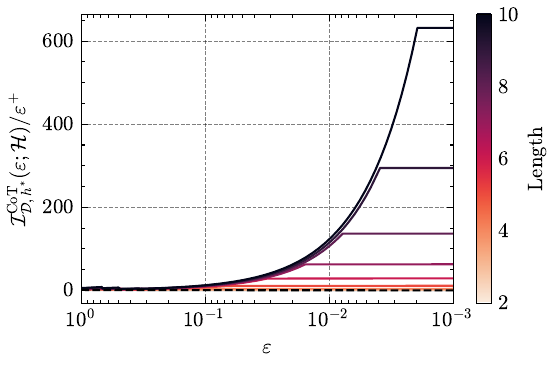}
        \captionsetup{width=.9\linewidth}
        \caption{Ratio of CoT Information to ``clipped'' $\epsilon$, varying input length. 
        }\label{appendixfig:DFA_CoTInfoRatio_vs_Epsilon_by_length}
    \end{subfigure}
    \hfill
    \begin{subfigure}[t]{0.475\textwidth}
        \centering
        \includegraphics[width=\linewidth]{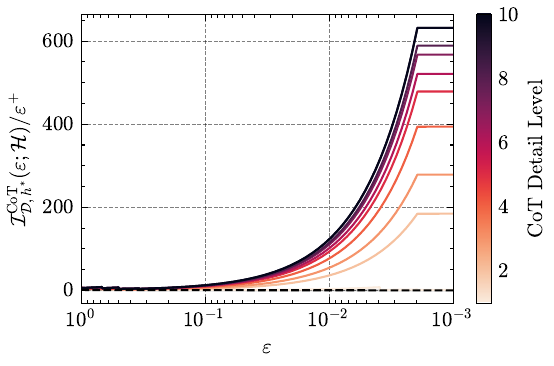}
        \captionsetup{width=.9\linewidth}
        \caption{Ratio of CoT Information to ``clipped'' $\epsilon$, varying level of detail in CoT.}\label{appendixfig:DFA_CoTInfoRatio_vs_Epsilon_by_detail}
    \end{subfigure}

    \begin{subfigure}[t]{0.475\textwidth}
        \centering
        \includegraphics[width=\linewidth]{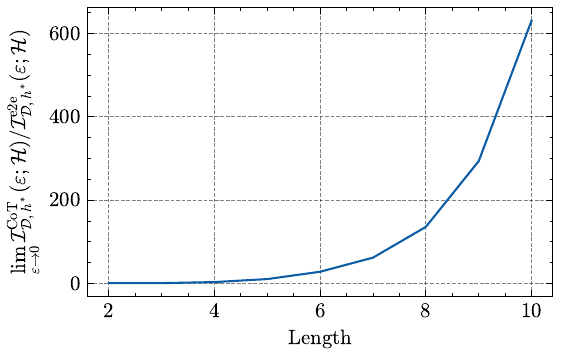}
        \captionsetup{width=.9\linewidth}
        \caption{Ratio of CoT Information to $\epsilon^+$, by input length}\label{appendixfig:DFA_lim_CoTInfo_Epsilon_Ratio_vs_Length}
    \end{subfigure}
    \hfill
    \begin{subfigure}[t]{0.475\textwidth}
        \centering
        \includegraphics[width=\linewidth]{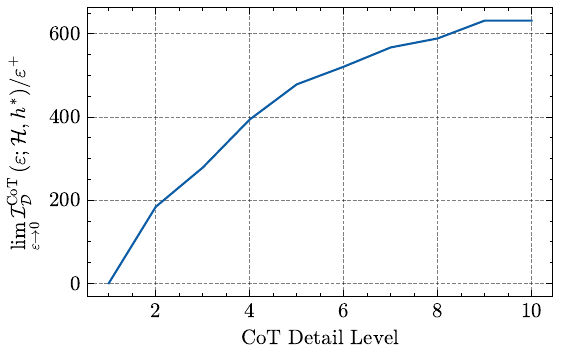}
        \captionsetup{width=.9\linewidth}
        \caption{Ratio of CoT Information to $\epsilon^+$, by level of detail in CoT.}\label{appendixfig:DFA_lim_CoTInfo_Epsilon_Ratio_vs_DetailLevel}
    \end{subfigure}

    \begin{subfigure}[t]{0.475\textwidth}
        \centering
        \includegraphics[width=\linewidth]{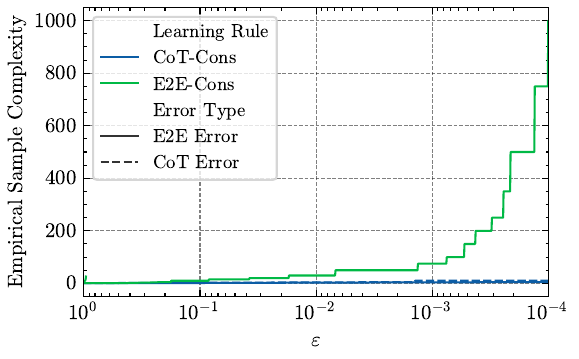}
        \captionsetup{width=.9\linewidth}
        \caption{Empirical Sample Complexity.}\label{appendixfig:DFA_empirical_sample_complexity}
    \end{subfigure}
    \hfill
    \begin{subfigure}[t]{0.475\textwidth}
        \centering
        \includegraphics[width=\linewidth]{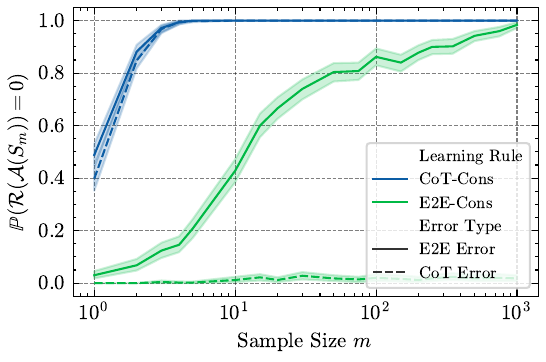}
        \captionsetup{width=.9\linewidth}
        \caption{Empirical probability of each learning rule returning a predictor with zero error.}\label{appendixfig:DFA_empirical_learning_curves}
    \end{subfigure}
    \caption{Numerical experiments for deterministic finite automata CoT hypothesis class.}\label{appendixfig:DFA_simulations}
    \vskip5pt
\end{figure*}

\begin{figure*}[ht!]
    \centering
    \begin{subfigure}[t]{0.45\textwidth}
        \includegraphics[width=\linewidth]{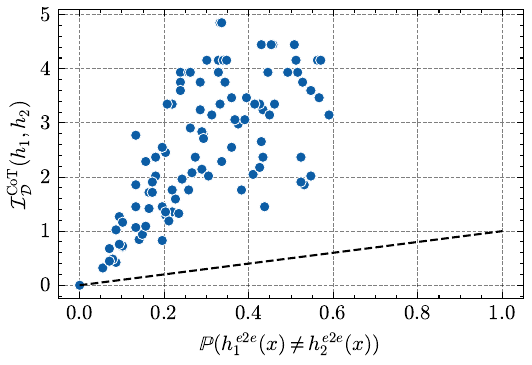}
        \caption{Relative CoT information between pairs of hypotheses plotted against their end-to-end disagreement.}\label{appendixfig:LinThresh:CoTInfo_vs_E2EDiff}
    \end{subfigure}
    \hfill
    \begin{subfigure}[t]{0.45\textwidth}
        \includegraphics[width=\linewidth]{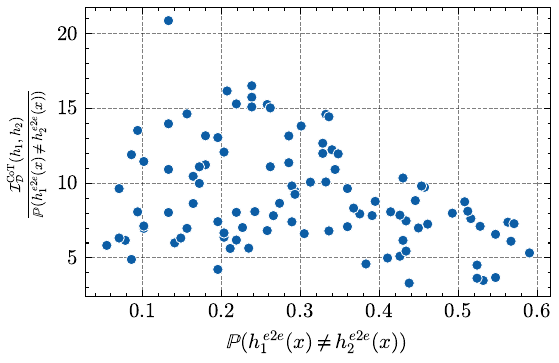}
        \caption{The ratio of pairwise CoT information to end-to-end disagreement, plotted against the end-to-end disagreement.}\label{appendixfig:LinThresh_Relative_CoTInfo_vs_E2EDiff}
    \end{subfigure}

    \begin{subfigure}[t]{0.45\textwidth}
        \includegraphics[width=\linewidth]{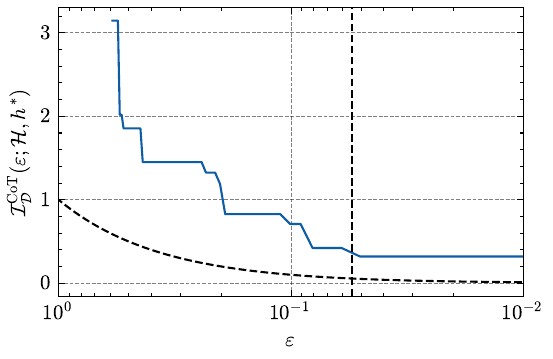}
        \caption{CoT information $\cotinfo(\epsilon; \calH)$ as a function of $\epsilon$.}\label{appendixfig:LinThresh:CoTInfo_vs_Epsilon}
    \end{subfigure}    
    \hfill
    \begin{subfigure}[t]{0.45\textwidth}
        \includegraphics[width=\linewidth]{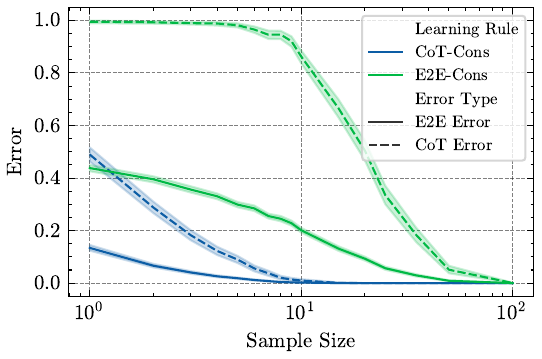}
        \caption{Learning curves with and without CoT supervision.}\label{appendixfig:LinThresh:learning_curves}
    \end{subfigure}

    \begin{subfigure}[t]{0.45\textwidth}
        \includegraphics[width=\linewidth]{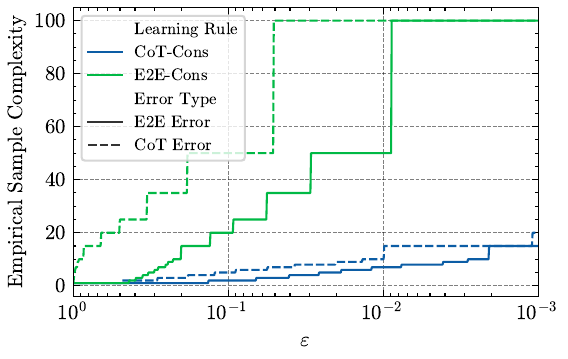}
        \caption{Empirical sample complexity.}\label{appendixfig:LinThresh:empirical_sample_complexity}
    \end{subfigure}
    \hfill
    \begin{subfigure}[t]{0.45\textwidth}
        \includegraphics[width=\linewidth]{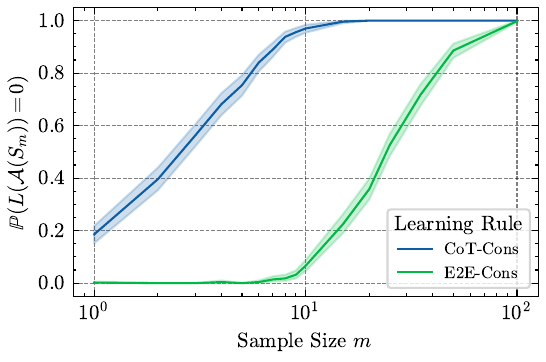}
        \caption{Empirical probability of each learning rule returning a predictor with zero error.}\label{appendixfig:LinThresh:learning_curves_prob_success}
    \end{subfigure}
    \caption{Numerical experiments for iterated linear thresholds CoT hypothesis class.}
    \label{appendixfig:LinThresh}
\end{figure*}

\subsection{Deterministic finite automata}


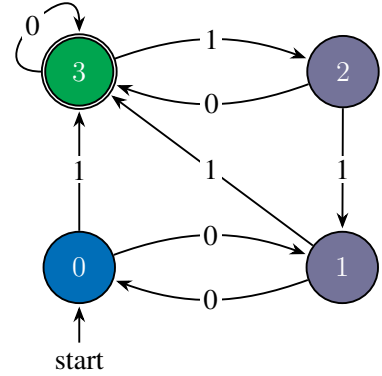
\begin{wrapfigure}{R}{0.35\linewidth}
    \centering
    \resizebox{\linewidth}{!}{
        \begin{tikzpicture}[
        ->, 
        >=Stealth, 
        shorten >=1pt, 
        auto,
        node distance=1cm and 1.5cm, 
        thick 
    ]
    
        \tikzstyle{stateblue}=[state, fill=NavyBlue, text=white]
        \tikzstyle{stategray}=[state, fill=CadetBlue, text=white]
        \tikzstyle{stategreen}=[state, accepting, fill=Green, text=white]
    
        \node[initial, initial where=below, stateblue] (q0) {$0$};
    
        \node[stategray, right=2.8cm of q0] (q1) {$1$};
    
        \node[stategreen, above=1.8cm of q0] (q3) {$3$};
    
        \node[stategray, right=2.8cm of q3] (q2) {$2$};

        \path (q0) edge [bend left=20] node[fill=white, inner sep=1.5pt, auto=false] {0} (q1)
                     edge              node[fill=white, inner sep=1.5pt, auto=false] {1} (q3);
    
        \path (q1) edge [bend left=20] node[fill=white, inner sep=1.5pt, auto=false] {0} (q0)
                     edge              node[fill=white, inner sep=1.5pt, auto=false] {1} (q3);
    
        \path (q2) edge [bend left=20] node[fill=white, inner sep=1.5pt, auto=false] {0} (q3)
                     edge              node[fill=white, inner sep=1.5pt, auto=false] {1} (q1);
    
        \path (q3) edge [loop, out=180, in=90, distance=1.1cm] node[fill=white, inner sep=1.5pt, auto=false] {0} (q3) 
                     edge [bend left=20] node[fill=white, inner sep=1.5pt, auto=false] {1} (q2);
    
    \end{tikzpicture}
    }
    \caption{The state transition graph of the DFA corresponding to the target hypothesis $\hstar$.}
    \vskip-15pt
    \label{fig:target_dfa}
\end{wrapfigure}

CoT learning is often used as a means of providing supervision on the intermediate computation of a reference algorithm to be learned. In the following experiments, we use deterministic finite automata (DFAs) as a model of computation to study this type of supervision.

Recall that a DFA is specified by a transition function $\delta: \calS \times \Sigma \to \calS$ that maps the current state and current symbol to the next state; the automaton starts in an initial state $s_{\mathrm{init}} \in \calS$, and an accept state $s_{\mathrm{accept}} \in \calS$ is used to indicate that the string is accepted by the automaton. The final output of the DFA is $y=\IInd{z_n = s_{\mathrm{accept}}}$.
We consider a CoT hypothesis class $\calH \subset (\calZ \times \calY)^{\calX}$ where the input space is $\calX = \Sigma^n$ (or $\Sigma^*$) for some alphabet $\Sigma$. The hypothesis class $\calH$ is the set of all DFAs with state space $\calS$ operating over the alphabet $\Sigma$. The output $y = \hete{h}(\x)$ is the acceptance ($y=1$) or rejection ($y=0$) of the string $\x \in \calX = \Sigma^n$, where the chain-of-thought $\z = (\zs{n}) = \hcot{h}(\x) \in \calS^n$ is the sequence of states that the DFA visits during its execution.

In our simulations, we place a uniform distribution over the input space $\calD = \Unif(\calX) = \Unif(\Sigma^n)$. We then choose $\hstar \sim \Unif(\calH)$ randomly over the set of all automata operating on the fixed state space $\calS$ and vocabulary $\Sigma$, with a fixed initial state and accept state, and numerically compute $\relcotinfo(\hstar, h)$ and $\cotinfo(\epsilon; \calH)$.~\Cref{appendixfig:DFA_simulations} shows the simulation results for these DFA experiments, 
with the target hypothesis $\hstar$ shown in Figure~\ref{fig:target_dfa}.

\def\substudy#1{\textit{\bfseries #1.}}

\substudy{Value of CoT example \textit{vs.}~E2E example} \Cref{appendixfig:DFA_CoTInfoRatio_vs_Epsilon_by_length,appendixfig:DFA_CoTInfoRatio_vs_Epsilon_by_detail} show the ratio between the CoT information $\cotinfo(\epsilon; \calH)$ and $\epsilon$ as a function of $\epsilon$. This ratio can be interpreted as the \textit{value} of one CoT example compared to an end-to-end example, since the learning rate for E2E-supervision scales as $\log \aabs{\calH} /\epsilon$, whereas the rate for CoT supervision scales as $\log \aabs{\calH}/ \cotinfo(\epsilon; \calH)$. We clip $\epsilon$ in this ratio 
according to $\epsilon^+ = \epsilon\,\vee\,\epsilon^*$, where the smallest non-zero end-to-end error is $\epsilon^* = \min\sset{\Lete{\calD,\hstar}(h): \Lete{\calD,\hstar}(h) \neq 0}$. In other words, to achieve a target error $\epsilon < \epsilon^*$ smaller than this critical level, only $\calO(1/\epsilon^*)$ samples are required, not $\calO(1/\epsilon)$ samples. The quantity $\lim_{\epsilon \to 0} \cotinfo(\epsilon; \calH) / \epsilon^+$ can be interpreted as the ratio of the number of samples needed to achieve zero error under CoT supervision compared to the number required under E2E supervision.

\substudy{Varying input length} In~\Cref{appendixfig:DFA_CoTInfoRatio_vs_Epsilon_by_length} we vary the input sequence length $n$ in the input distribution $\calD$. That is, we take $\calD_n = \Unif(\Sigma^n)$, and we compute the CoT information as a function of $\epsilon$ for that distribution, varying $n$. We observe that the CoT information is increasing relative to $\epsilon$ as the input length $n$ increases. An intuitive explanation for this is that longer inputs allow a greater portion of the DFA's state transition map to be explored in a single example. ~\Cref{appendixfig:DFA_lim_CoTInfo_Epsilon_Ratio_vs_Length} depicts $\lim_{\epsilon \to 0} \cotinfodomain{\calD_n,\hstar}(\epsilon; \calH) / \epsilon^+$ as a function of the sequence length $n$. We see that this increases rapidly with $n$, suggesting that, for large $n$, the probability of a CoT trajectory agreeing for a pair of hypotheses with very different end-to-end behaviors is vanishingly small. For $n=10$, we see that this value is roughly $600$. By our theory (e.g.,~\Cref{result:cotinfo_ete_learning}), this would suggest a $600 \times$ improvement in sample complexity for learning with zero target error. This is indeed supported by our numerical simulations on learning with $\CoTCons$ and $\EtECons$ learning rules, as depicted in~\Cref{appendixfig:DFA_empirical_sample_complexity,appendixfig:DFA_empirical_learning_curves} and discussed further below.

\substudy{Varying CoT detail level} Next, we consider fixing the input length to $n = 10$ and instead varying the level of detail in the CoT annotations. We do this by varying the proportion of the state trajectory that is revealed to the learner, denoted by $T \in [n]$. For each $T$, we run a simulation where the CoT trajectory is limited to the first $T$ symbols of the state trajectory. As expected, the CoT information monotonically increases with $T$. In~\Cref{appendixfig:DFA_CoTInfoRatio_vs_Epsilon_by_detail} we plot the ratio of CoT information to $\epsilon$ as a function of $\epsilon$, varying the level of detail $T$, and in~\Cref{appendixfig:DFA_lim_CoTInfo_Epsilon_Ratio_vs_DetailLevel} we plot $\lim_{\epsilon \to 0} \cotinfo(\epsilon; \calH) / \epsilon^+$. While this is monotonically increasing in $T$, it begins to plateau as $T$ increases, suggesting diminishing returns in distinguishing hypotheses via their CoT trajectories---most of the information is revealed in the earlier portions of the CoT.

\substudy{Empirical sample complexity of $\CoTCons$ and $\EtECons$} Next, we directly evaluate the sample complexity of CoT-supervised learning compared to E2E-supervised learning by running simulations using the $\CoTCons$ and $\EtECons$ learning rules. We vary the sample size $m$, randomly draw a dataset $S_m$ of size $m$, and apply each learning rule to return a predictor $\calA(S_m)$, computing the end-to-end risk $\Lete{\calD}(\calA(S_m))$ of the returned predictor. The  $\CoTCons$ and $\EtECons$ learning rules are implemented by constructing the respective consistency sets and returning a predictor uniformly at random from these sets. We repeat this for 500 independent trials to estimate the distribution of risk $\Lete{\calD}(\calA(S_m))$ as a function of the sample size $m$ for each learning rule.~\Cref{appendixfig:DFA_empirical_sample_complexity} depicts the empirical sample complexity. It is computed by calculating the empirical average of the risk for each sample size $m$, and plotting the first sample size at which each target error level $\epsilon$ is attained. Giving a complementary view,~\Cref{appendixfig:DFA_empirical_learning_curves} plots the empirical probability (over the random draw of the sample $S_m$) of returning a predictor with zero loss as a function of the sample size $m$. Across both figures, we see a gain in sample efficiency from CoT supervision of the order of $10^2$--$10^3$, which agrees with the theoretical predictions via the CoT information, $\lim_{\epsilon \to 0} \cotinfo(\epsilon; \calH) / \epsilon^+ \approx 600$.




\aanote{Note for future version: there is a slightly updated version of these DFA simulations (difference is removing an accidental $[1:]$ in definition of FSM (basically equivalent to $L-=1$?). Okay for now, but can add updated version later after checking everything. CoT info ratio a bit larger.}

\subsection{Iterated linear thresholds}

In practice, a common way of implementing CoT supervision is to consider a sequence model class (e.g., transformers) and to train the model to generate the CoT as a sequence token-by-token, before returning the final output. In this section, we consider another CoT hypothesis class that simulates a simple form of this autoregressive generation, using a sequence model class that generates tokens as a linear function of a fixed-size window of the history.

Fix a window size $d$, and let $w \in \sset{-1, 0, 1}^{d}$ be a set of weights over this window. For a binary sequence $\x = (\xs{n}) \in \sset{0,1}^n$, we define $f_w: \x \mapsto (\x, z) \in \sset{0,1}^{\aabs{x} + 1}$ as the function that returns a sequence with the symbol $z$ appended to $\x$, where $z$ is computed by applying a threshold to the $w$-weighted linear combination of the prior $d$ symbols,
\[f_w: \x \mapsto (\x, z) \in \sset{0,1}^{\aabs{\x} + 1}, \ z = \Ind{\sum_{i=0}^{d-1} w_i \cdot x_{n - i} \geq 0}.\]
The CoT hypothesis class is defined by iterating $f_w$ for $T$ steps, taking the produced sequence as the CoT, and the final symbol as the output. That is, $\calH = \sset{h_w : w \in \sset{-1, 0, 1}^d}$, with
\begin{align*}
    \hcot{h_w}: \x \mapsto (\zs{T}), \ \hete{h_w}: \x \mapsto z_T \\
    (\x, (z_1, \ldots, z_T)) = \underbrace{(f_w \circ \dots \circ f_w)}_{T \text{ times}}(\x).
\end{align*}
This represents a simple type of autoregressive CoT hypothesis class, similar to the one studied in~\citet{joshi2025theorylearningautoregressivechain}. 

In this section, we carry out a series of numerical simulations to explore the implications of CoT information for such a class. We take the window size to be $d=8$ and the number of steps to be $T=16$. The experimental results are summarized in~\Cref{appendixfig:LinThresh}.
In~\Cref{appendixfig:LinThresh:CoTInfo_vs_Epsilon} we plot the CoT information $\cotinfo(\epsilon; \calH)$, which illustrates the monotonicity in $\epsilon$ established in~\Cref{lemma:cotinfo_props}. We observe that $\lim_{\epsilon \to 0} \cotinfo(\epsilon; \calH) \approx 0.32 > \epsilon^* \approx 0.05$, and $\lim_{\epsilon \to 0} \cotinfo(\epsilon; \calH) / \epsilon^+ \approx 6$. Consequently, our theory would suggest a $6 \times$ gain in sample efficiency from CoT supervision. This matches remarkably well with the experimental learning results shown in~\Cref{appendixfig:LinThresh:learning_curves,appendixfig:LinThresh:empirical_sample_complexity,appendixfig:LinThresh:learning_curves_prob_success}. For example,~\Cref{appendixfig:LinThresh:empirical_sample_complexity} indicates a roughly $5$-fold improvement in sample complexity for the $\CoTCons$ learning rule, compared to the $\EtECons$ learning rule at the smallest target error levels.



    



\section{Discussion and Related Work}
\label{sec:discussion}

To conclude, we summarize further explorations that are presented in the appendix, position our work in the context of related literature, and highlight a few promising directions for future research.

\subsection{Further explorations}

We describe additional results not included in the main paper, and defer to the appendix for details.

\textbf{\textit{Learning with mixed CoT and E2E supervision.}} In practice, obtaining CoT-annotated examples can be a costly and labor-intensive process, limiting their quantity, whereas input-output examples without CoT annotation can be relatively cheap and plentiful. In many scenarios, one might have access to a large number of end-to-end examples and a limited number of CoT examples. This suggests a need for learning algorithms that can make use of both types of examples.~\Cref{ssec:mixed_supervision} studies learning from datasets with a mix of E2E and CoT supervision.

\textbf{\textit{CoT learning with inductive priors.}} Encoding prior knowledge about solution structure is critical for learning complex functions, such as those representing multi-step reasoning processes, particularly from limited data. \Cref{ssec:inductive_priors} explores chain-of-thought learning with inductive priors.

\textbf{\textit{Transfer learning and out-of-distribution generalization.}} Chain-of-thought supervision has significant implications for out-of-distribution generalization, as it guides the learning algorithm toward solutions that exhibit the correct step-by-step reasoning, potentially enabling robust generalization to novel input instances. In~\Cref{ssec:transfer_learning}, we define a variant of the CoT Information measure, $\cotinfodomain{\calD_{\tr} \to \calD_{\test}}(\epsilon; \calH)$, that captures transfer learning under CoT supervision. We also present a result on learning under CoT supervision with distribution shift, supported by experimental simulations.

\subsection{Related work}

Early usage of the term ``chain-of-thought'' referred to empirical prompting techniques that conditioned large language models to generate a series of intermediate reasoning steps before returning the final answer~\citep{nye2021show,cobbe_training_2021,wei2022chain,kojima2022large}. Such prompting often employs in-context learning, where CoT examples are provided within the model's context before it processes the input~\citep{wei2022chain}. Today, the term \textit{chain-of-thought} takes a broader meaning, as it now comprises a core component of the training of large language models~\citep{hoLargeLanguageModels2023,chung2022scalinginstructionfinetunedlanguagemodels,ouyang2022traininglanguagemodelsfollow}. 

Several works have sought to theoretically understand the advantages of the chain-of-thought paradigm by analyzing the representational capacity of neural sequence models with and without chain-of-thought~\citep{perez2021attention,merrill2023expresssive,feng2023revealingmysterychainthought,li2024chainthoughtempowerstransformers}. 
For example,~\citet{perez2021attention} show that Transformers can simulate Turing machines by generating CoT tokens, and~\citet{merrill2023expresssive} extend this analysis by providing a more refined characterization of function classes in terms of the number of CoT steps.

While these studies demonstrate the \textit{existence} of neural network models capable of computing a given function via a specific chain-of-thought, they do not address the statistical question of whether such models can be efficiently \textit{learned} from data. Our work focuses on these statistical learning aspects, a direction also pursued by a few recent studies.
For example, \citet{hu2024unveilingstatisticalfoundationschainofthought} studies statistical aspects of chain-of-thought \textit{prompting} techniques (e.g., in-context learning) via a latent variable model, relating CoT prompting to Bayesian model averaging~\citep{hoeting1999bayesian}.

Addressing a setting more similar to ours, \citet{malach2024autoregressivenexttokenpredictorsuniversal} considered learning from CoT datasets. The core idea of this work is to express a CoT function as a composition of $T$ (a fixed number) different sequence-domain functions, $\calH = \calH_1 \times \cdots \times \calH_T$, where each $\calH_t: \calX \times \Sigma^{t - 1} \to \Sigma$ maps the input and CoT generated so far $(\x, z_1, ..., z_{t-1})$ to the next CoT symbol $z_t$, with the $T$-th CoT symbol serving as the final output. With this formulation,~\citet{malach2024autoregressivenexttokenpredictorsuniversal} proposes learning each $\calH_t$ independently (with independent parameters), enabling the direct application of standard PAC results. While this approach simplifies the analysis, its assumption of independently learned functions at each iteration is a notable limitation, which does not accurately reflect real-world settings.

Building on this work, \citet{joshi2025theorylearningautoregressivechain} considers a time-invariant composition of sequence-domain functions, where the function at each iteration remains the same. Their analysis relies on bounding the \textit{CoT error} using standard PAC learning tools based on the VC dimension of the CoT loss class, noting that the CoT error provides an upper bound on the end-to-end error (cf. \Cref{ssec:key_idea_control_cot_error} and the second row of~\Cref{table:comparison_table}). 
Moreover,~\citet{joshi2025theorylearningautoregressivechain} construct a synthetic autoregressive class where the VC dimension of the CoT loss class is much smaller than the VC dimension of the end-to-end loss class, implying a statistical advantage for CoT supervision.
While our results also involve the CoT loss class, thus inheriting the advantages of such class complexity differences, the focus of our analysis is on the content of \textit{information} per CoT-supervised sample. This is represented in the dependence of the sample complexity on the target error $\epsilon$, captured by the CoT information measure. This provides a more complete description of the statistical advantage of CoT supervision in statistical learning.
We contend that this information-theoretic analysis, centered on CoT information rather than solely on loss class complexity, identifies a more fundamental source of statistical advantage in CoT-supervised learning. This view is supported by our lower bound results and the close agreement between our theory and simulations.



\subsection{Conclusion and future work}

This work provides a theoretical analysis of learning with chain-of-thought (CoT) supervision, introducing the CoT information measure to characterize its statistical advantages via both upper and lower bounds. This opens several promising directions for future theoretical study of CoT learning.

The upper bounds obtained in this work are based on analyzing natural but relatively simple learning rules: CoT-consistency in the realizable setting, and CoT-ERM in the agnostic setting. Investigating the optimality of these algorithms and exploring the design of optimal learning strategies remain key open questions. This may be especially relevant in the agnostic setting, where the alignment between the data distribution and the CoT hypothesis class is critical. For instance, future work could explore adaptive learning rules that balance the optimization of the CoT error and end-to-end error to avoid over-optimizing the CoT when the hypothesis class is poorly aligned with the data. 
While we use the VC dimension of CoT loss class as the measure of complexity or size of the hypothesis class, it will also be important to consider other measures of model complexity, including covering numbers, local Radamacher complexities, and one-inclusion graphs~\citep{bartlett2002localized,bartlett2005local,bousquetLocalMeasuresComplexity2004,lugosiComplexityRegularizationLocalized2004,mendelsonImprovingSampleComplexity2002,DBLP:journals/iandc/HausslerLW94}. 

Furthermore, while our current lower bound results address the realizable setting, establishing corresponding lower bounds for the agnostic setting remains an important open problem. Additionally, future research could investigate the formulation of different structural conditions, such as low-noise assumptions~\citep{mammen1999smooth,massartRiskBoundsStatistical2006} in the CoT setting, to achieve faster learning rates. Developing more sophisticated probabilistic analysis, beyond the standard formulation of agnostic learning, holds promise for more faithfully capturing the complexities of training language models with chain-of-thought reasoning traces, which are often inherently probabilistic.

Finally, our lower bound approach using Fano's method leads naturally to viewing a chain-of-thought sequence $\hcot{h}(\x)$ as a codeword that redundantly encodes an algorithmic procedure for computing the correct output $y$ for the input $\x$. This codeword is then rendered in natural language as an observed sequence $\z$ after passing through a noisy channel. From this perspective, a CoT hypothesis class $\calH$ can be viewed as a \textit{family} of codes. In CoT learning, rather than observing fixed input-output examples, there often exists some flexibility in the design of the CoT training trajectories, as they are human-generated.   As a coding theory problem, we would like the CoT trajectories for different hypotheses to look as different as possible to make them easy to distinguish, even under noise. The concept of CoT information may allow this to be formalized, leading to the design families of CoT codes that can be efficiently identified from data.

\section*{Acknowledgements}

We thank Dana Angluin and Peter Bartlett for helpful comments on this work. This research was supported by the funds provided by the National Science Foundation and by DoD OUSD (R\&E) under Cooperative Agreement PHY-2229929 (The NSF AI Institute for Artificial and Natural Intelligence). 

\printbibliography


\clearpage


\appendix

\section{Proofs of Properties of the CoT Information (\texorpdfstring{\Cref{ssec:cot_properties}}{Section~\ref{ssec:cot_properties}})}\label{sec:proofs_properties}

\begin{lemma*}[Restatement: Properties of the CoT-information]
  Let $\calH \subset (\calZ \times \calY)^{\calX}$ be a CoT hypothesis class.
  \begin{enumerate}
    \item The CoT information is always larger than the ``end-to-end information'':\newline
    For any $h_1, h_2 \in \calH$, $\relcotinfo(h_1, h_2) \geq \pprobunder{x \sim \calD}{\hete{h_1}(x) \neq \hete{h_2}(x)}$. Moreover, \newline
    $\cotinfo(\epsilon; \calH) \geq \epsilon,\, \forall \epsilon \in [0,1], \forall \hstar \in \calH$.
    \item $\cotinfo(\,\cdot\,; \calH)$ is monotonically increasing in $\epsilon$: \newline For any $\calH, \hstar \in \calH$, and $\epsilon_1 \leq \epsilon_2$, $\cotinfo(\epsilon_1; \calH) \leq \cotinfo(\epsilon_2; \calH)$.
    \item $\cotinfo(\epsilon; \,\cdot)$ is monotonically decreasing in the hypothesis class: \newline For any hypothesis classes $\calH \subseteq \calH'$ and $\hstar \in \calH$, $\cotinfo(\epsilon; \calH) \geq \cotinfo(\epsilon; \calH')$.
  \end{enumerate}
\end{lemma*}

\begin{proof}
  \hphantom{~}

  \textit{Property 1.}   To prove the first claim, take any $h_1, h_2 \in \calH$, and observe that
  \begin{align*}
    \relcotinfo(h_1, h_2) &:= - \log \probunder{\x \sim \calD}{\hcot{h_1}(\x) = \hcot{h_2}(\x), \hete{h_1}(\x) = \hete{h_2}(\x)} \\
    &\geq - \log \probunder{\x \sim \calD}{\hete{h_1}(\x) = \hete{h_2}(\x)} \\
    &= - \log \paren{1 - \probunder{\x \sim \calD}{\hete{h_1}(\x) \neq \hete{h_2}(\x)}} \\
    &\geq \probunder{\x \sim \calD}{\hete{h_1}(\x) \neq \hete{h_2}(\x)},
  \end{align*}
  where the final inequality is by the identity $- \log (1-x) \geq x$.

  We now show the second claim,
  \begin{align*}
    \cotinfo(\epsilon; \calH) &:= \min_{h \in \Deltaete_{\calD}(\epsilon; \calH, \hstar)} \relcotinfo(\hstar, h) \\
    &\geq \min_{h \in \Deltaete_{\calD}(\epsilon; \calH, \hstar)} \probunder{\x \sim \calD}{\hete{h}(\x) \neq \hete{\hstar}(\x)} \\
    &\geq \epsilon
  \end{align*}
  The final inequality follows because $\probunder{\x \sim \calD}{\hete{h}(\x) \neq \hete{\hstar}(\x)} \geq \epsilon,\, \forall h \in \Deltaete_{\calD}(\epsilon; \calH, \hstar)$, by definition.

  \textit{Property 2.} This follows from the fact that $\Deltaete_{\calD}(\epsilon; \calH, \hstar) := \sset{h \in \calH: \pprob{\hete{h}(\x) \neq \hete{\htilde}(\x)} > \epsilon}$ is decreasing in $\epsilon$. For $\epsilon_1 \leq \epsilon_2$, we have $\Deltaete_{\calD}(\epsilon_1; \calH, \hstar) \supseteq \Deltaete_{\calD}(\epsilon_2; \calH, \hstar)$, and hence
  \begin{align*}
    \cotinfo(\epsilon_1; \calH) := \min_{h \in \Deltaete_\calD(\epsilon_1; \calH, \hstar)} \relcotinfo(h, \hstar)
    \leq  \min_{h \in \Deltaete_\calD(\epsilon_2; \calH, \hstar)} \relcotinfo(h, \hstar)
    =: \cotinfo(\epsilon_2; \calH).
  \end{align*}

  \textit{Property 3.} This property similarly follows from the fact that $\Deltaete_{\calD}(\epsilon; \calH, \hstar)$ is increasing in $\calH$: $\Deltaete_{\calD}(\epsilon; \calH, \hstar) \subset \Deltaete_{\calD}(\epsilon; \calH', \hstar)$ for $\calH \subseteq \calH'$. Thus,
  \begin{align*}
    \cotinfo(\epsilon; \calH) := \min_{h \in \Deltaete_\calD(\epsilon; \calH, \hstar)} \relcotinfo(h, \hstar)
    \geq  \min_{h \in \Deltaete_\calD(\epsilon; \calH', \hstar)} \relcotinfo(h, \hstar)
    =: \cotinfo(\epsilon; \calH').
  \end{align*}
\end{proof}

\section{Simple Examples of CoT Hypothesis classes and their CoT Information}\label{sec:examples_appendix}

In this section, we provide a more detailed discussion on the illustrative examples presented in~\Cref{sec:prelims}. The first two examples represent the two extremes on the informativeness of CoT supervision and serve as sanity checks to confirm that the CoT information captures the expected statistical complexity in each case. The next example considers a hypothesis class where the CoT supervision includes $T$ independent samples of the input-output function, and shows that the CoT information scales linearly with $T$ as expected. Finally, we consider CoT hypothesis classes based on models of computation such as finite-state machines, where the CoT is taken to be the state trajectory of the computational process.

\begin{example}[Uninformative CoT yields small CoT information]\label{example:uninformative}
    In some cases, the CoT annotations may be entirely ``independent'' from the end-to-end behavior, and hence uninformative for the purposes of learning with respect to the end-to-end error. We will see that the CoT information $\cotinfo(\epsilon; \calH)$ captures this.  We will model the ``independence'' between the CoT and the end-to-end behavior via a hypothesis class with a \textit{product structure}. Let $\calF^{\ete} \subset \calY^{\calX}$ be a function class from inputs $\calX$ to outputs $\calY$ and let $\calF^{\CoT} \subset \calZ^{\calX}$ be a function class from inputs $\calX$ to the CoT space $\calZ$. We consider a CoT hypothesis class $\calH = \calF^{\CoT} \times \calF^{\ete}$. where
    \[\calH = \sset{h_{g,f}: x \mapsto (g(x), f(x)) \ | \ g \in \calF^{\CoT}, f \in \calF^{\ete}}.\] 
    Let $\hstar = (g_\star, f_\star) \in \calH = \calF^{\CoT} \times \calF^{\ete}$. Let $\bar{f} \in \calF^{\ete}$ be the end-to-end hypothesis with smallest disagreement with $f_\star$ among hypothesis with end-to-end error at least $\epsilon$:
    \[\bar{f} = \argmin_{f \in \calF^{\ete}} \set{\probunder{x \sim \calD}{f_\star(x) \neq f(x)} \ : \ \probunder{x \sim \calD}{f_\star(x) \neq f(x)} > \epsilon}.\]
    Let $\epsilon^+ := \min \sset{\pprob{f_\star(x) \neq f(x)} \ : \ \pprob{x \sim \calD}{f_\star(x) \neq f(x)} > \epsilon}$. By the product-structure definition of $\calH$, there exists a hypothesis $\bar{h} := (g_\star, \bar{f}) \in \Deltaete_{\calD}(\epsilon; \calH)$ such that $\pprob{\hstar(x) \neq \bar{h}(x)} = \pprob{\hete{\hstar}(x) \neq \hete{\bar{h}}(x)} = \epsilon^+$, and hence $\cotinfo(\epsilon; \calH, \hstar) = - \log(1 - \epsilon^+)$. Thus, there is no statistical advantage in observing the CoT annotations.
\end{example}

\begin{example}[Fully Informative CoT yields infinite CoT information]\label{example:degenerate_inf_cotinfo}
    Recall the definition of the CoT information as
    \[\cotinfo(\epsilon; \calH) := \inf\set{- \log \probunder{x}{h(x) = \hstar(x)} : h  \in \Deltaete_{\calD}(\epsilon; \calH, \hstar)}.\]
    This can be infinite when $\forall h  \in \Deltaete_{\calD}(\epsilon; \calH, \hstar)$, $\pprobunder{x}{h(x) = \hstar(x)} = 0$. This occurs in the extreme case where a single CoT annotation uniquely identifies the end-to-end behavior of the hypothesis (i.e., on every input in the support of $\calD$, each hypothesis has a unique CoT). To illustrate this, let $\calF^{\ete} \subset \calY^{\calX}$ be a class of functions from the input space $\calX$ to the output space $\calY$. Consider the CoT hypothesis class $\calH = \sset{h_f: x \mapsto (f, f(x)) \ : \ f \in \calF^{\ete}}$. In this extreme example, a single sample is enough to learn the function perfectly. This is captured by the CoT information since $\forall h_1 \neq h_2 \in \calH$, we have $\pprob{h_1(x) = h_2(x)} = 0$ and hence $\cotinfo(\epsilon; \calH) = \infty$.
\end{example}

\begin{example}[CoT Information captures i.i.d. examples in CoT]\label{example:iid}
    In this example, we consider a setting where the chain-of-thought represents i.i.d. observations from the end-to-end function, as a toy model that allows us to vary the informativeness of the CoT supervision for a fixed end-to-end function class. We will confirm that the CoT information implies the sample complexity rates that we would expect. Consider the CoT hypothesis class $\calH^{(T)}$ where the CoT encodes $T$ independent observations, defined as follows:
    \[\calH^{(T)} := \set{h_f^{(T)} : (x_1, \ldots, x_T) \mapsto (\underbrace{(f(x_1), \ldots,f(x_T))}_{\z = (\zs{T})}, \underbrace{f(x_T)}_{y}) \ \given \ f \in \calF}.\]
    Here, $\calF$ is a function class from $\bar{\calX}$ to $\bar{\calY}$, and $\calX = \bar{\calX}^T, \calZ = \bar{\calY}^T, \calY = \bar{\calY}$. 
    Let $\calD = \bar{\calD}^{\otimes T}$ for some distribution $\bar{\calD}$ over $\bar{\calX}$. Fix $\hstar = h_{\fstar}$ and let $h = h_f \in \Deltaete_{\calD}(\epsilon; \calH, \hstar)$. We have
    \begin{align*}
        \relcotinfo(\hstar, h) &= - \log \probunder{x \sim \calD}{\hstar(x) = h(x)} = - \log \probunder{x_1, \ldots, x_T \simiid \bar{\calD}}{\forall t \in [T]: \fstar(x_t) = f(x_t)} \\
        &\geq - \log (1 - \epsilon)^T = T \cdot (- \log(1- \epsilon)) \geq T \cdot \epsilon.
    \end{align*}
    This in turn implies that $\cotinfo(\epsilon; \calH) \geq T \cdot \epsilon$. That is, one CoT sample is worth $T$ end-to-end samples, and the CoT sample complexity is smaller by a factor of $T$. This is what we would expect for this example since a CoT example consists of $T$ independent samples.
\end{example}

\begin{example}[Learning Regular Languages with State-Trajectory CoT]\label{example:fsm_cotinfo_lowerbound}
    Let $\calH$ be the class of Finite-State Machines with common state space $\calS$ and operating over an alphabet $\Sigma$. That is, $\calH = \sset{h_\delta : \delta \in \calT}$, where $\calT$ is the set of transition functions $\calT = \calS^{\calS \times \Sigma}$. The Chain-of-Thought observed by the learner is the sequence of states visited by the DFA during its execution: for an input $\x = (\xs{n}) \in \Sigma^n$, the CoT of $h_\delta$ is $\z = (\zs{n})$, where $z_{t+1} = \delta(z_t, x_t)$. Observing the CoT can be interpreted as providing the learner with an input-dependent partial observation of the DFA's underlying transition function. Once the learner has identified all components of the transition function (or all components that are necessary to specify the input-output behavior), the learning objective is achieved. We can use this interpretation to lower-bound the CoT information. Let $\Delta(h_1, h_2) = \sset{(s, x) \in \calS \times \Sigma : \delta_1(s, x) \neq \delta_2(s, x)}$ be the set of state-symbol pairs on which $h_1$ and $h_2$'s transition functions differ. Then, we have
    \begin{align*}
        & 1 - \probunder{\x \sim \calD}{\hcot{\hstar}(\x) = \hcot{h}(\x), \hete{\hstar}(\x) = \hete{h}(\x)} \\
        &\geq \probunder{\x \sim \calD}{\hcot{\hstar}(\x) \neq \hcot{h}(\x)} \\
        &= \probunder{\x \sim \calD}{\exists t \in [n]: z_t \neq z_t^*} \\
        &= \probunder{\x \sim \calD}{\text{$\hstar$ visits any $(s, a) \in \Delta(h, \hstar)$}}
    \end{align*}
    Suppose that $\hstar$'s transition graph is $\ell$-connected in the sense that for every state $s \in \calS$ which is reachable by some input supported by $\calD$, $\exists \ell' \leq \ell, a_1, \ldots, a_{\ell'} \in \Sigma$ such that $s_{\ell'} = s$ where $s_{t+1} = \delta^*(s_t, a_t), s_1 = s_{init}$. Then, if e.g. $\calD = \Unif(\Sigma^n), n \geq \ell$, the above calculation implies that for all $h \neq \hstar$, 
    \[\probunder{\x \sim \calD}{\hcot{\hstar}(\x) = \hcot{h}(\x), \hete{\hstar}(\x) = \hete{h}(\x)} \leq 1 - \aabs{\Sigma}^{-(\ell + 1)}.\]
    Thus, the CoT information is lower bounded as
    \[\min_{\epsilon > 0} \cotinfo(\epsilon; \calH) \geq \abs{\Sigma}^{- (\ell + 1)}.\]
    Note that this bound may be loose since it only counts a single trajectory that can be used to distinguish between the pair of hypotheses. But, its strength is that it lower bounds the CoT information at all error levels $\epsilon$, and hence upper bounds the sample complexity of achieving \textit{zero} error.
    A rich literature exists on learning regular languages \citep[e.g.][]{goldComplexityAutomatonIdentification1978,angluinNoteNumberQueries1981,angluinLearningRegularSets1987,rivestDiversitybasedInferenceFinite1987,rivestInferenceFiniteAutomata1989,freundEfficientLearningTypical1993}.



\end{example}

\begin{example}[Learning Shuffle Ideals by Observing Computational Trace of Finite State Machines]{}
    The class of shuffle ideals is a simple subclass of regular languages that has been studied in the context of efficient PAC learning~\citep{simon1975piecewise,angluin2013learnability}. For a string $u \in \Sigma^n$, the shuffle ideal generated by $u$ is the language $\Sigma^* u_1 \Sigma^* u_2 \Sigma^* \cdots \Sigma^* u_{n} \Sigma^*$ consisting of all strings which contain $u$ as a subsequence. The class of shuffle ideals of strings of length $n$ can be represented by finite state automata with $n+1$ states. For a string $u \in \Sigma^n$, it's shuffle ideal is recognized by the finite state machine with the following transition function
    \begin{equation*}
        \delta(s, a) = \begin{cases} s + 1, &\text{if } a = u_s \\ s &\text{otherwise}.\end{cases}
    \end{equation*}
    The acceptance state is $s = n + 1$. This finite state machine has a state space with a sequential structure, with each state ``looking for'' a particular symbol. When that symbol is observed, the state progresses to the next. A string is accepted if state $n+1$ is reached, signifying that all symbols in the string $u$ are observed in the correct order. Due to the structure of this hypothesis class, each state has exactly one symbol that causes it to progress. Thus, to learn the FSA perfectly, it is enough to learn which symbol each state accepts. In fact, due to the sequential structure of this class of finite-state machines, this information is revealed on a single trajectory from a positive example. Thus, the CoT information can be bounded in terms of the probability of observing a positive example. In particular, for $\hstar \neq h \in \calH$, we have
    \begin{align*}
        \probunder{x \sim \calD}{\hcot{\hstar}(x) \neq \hcot{h}(x)} \geq \probunder{x \sim \calD}{\hete{\hstar}(x) = \text{accept}},
    \end{align*}
    and hence $\min_{\epsilon > 0} \cotinfo(\epsilon; \calH) \geq \probunder{x \sim \calD}{\hete{\hstar}(x) = \text{accept}}$.
\end{example}

\aawarning{Should wee keep or remove the Turing machine example? The connectivity condition becomes a bit more artificial in this case, so we're not including a lower bound calculation, but is it still illustrative to include it as an example of what the CoT could look like?}
\begin{example}[Turing Machines]{}
    It is possible to consider learning Turing Machines with CoT-supervision in a manner similar to~\Cref{example:fsm_cotinfo_lowerbound}. Recall that a Turing machine is specified by a transition function $\delta: \calS \times \Sigma \to \calS \times \Sigma \times \sset{\pm 1}$, mapping the current state $s$ and observed symbol $\sigma$ on the current position in the tape to the next state $s'$, the symbol to be written $\gamma$, and the direction to move the tape $d$. We may consider Turing machines with chain-of-thought-supervision, where the CoT is the trajectory of states, written symbols, and tape movements: $\z = (\langle s_1, \gamma_1, d_1 \rangle, \ldots, \langle s_{t(x)}, \gamma_{t(x)}, d_{t(x)}\rangle)$, where $t(x)$ is the halting time of the Turing machine, and can depend on the input $x$ and the instance $\hstar$. Similar to the case of DFAs, observing the CoT reveals an input-dependent partial specification of the underlying transition function.
    To lower bound the CoT information, one can consider analogous ``connectivity'' conditions to those discussed in~\Cref{example:fsm_cotinfo_lowerbound}. For example, one such condition is that for every $s \in \calS$, there exists an input prefix $w \in \Sigma^{\leq \ell}$ such that the Turing machine $\hstar$ lands in state $s$ when reading $w$ and starting at $s_{init}$.

    \aanote{Has there been work that explores or formalizes ``connectivity''-type conditions on Turing Machines?}
\end{example}

\section{Proofs for \texorpdfstring{\Cref{sec:upper_bounds}}{Section~\ref{sec:upper_bounds}}: Upper Bounds}\label{sec:proof_upperbounds}

\subsection{Proof of \texorpdfstring{\Cref{result:cotcons_cotinfo_infH}}{Result~\ref{result:cotcons_cotinfo_infH}}}\label{ssec:proof:result:cotcons_cotinfo_infH}

\begin{result*}[Restatement: Learning Infinite Classes under CoT-Supervision]
    Let $\calH \subset (\calZ \times \calY)^{\calX}$ be a CoT hypothesis class. For any distribution $\calD$ over $\calX \times \calY \times \calZ$ realized by some $\hstar \in \calH$, the CoT-consistency learning rule has a sample complexity of
    \[m(\epsilon, \delta) =  \bigO\paren{\Bigparen{\frac{1}{\cotinfo(\epsilon; \calH)} + 1} \Bigparen{ \VC(\calLCoT(\calH)) \cdot \log \Bigparen{\frac{1}{\cotinfo(\epsilon; \calH)} + 1} + \log(1/\delta)}}\]
    That is, for any $m \geq m(\epsilon, \delta)$, we have that with probability at least $1 - \delta$ over $S \sim \calD^m$, 
    \[\forall h \in \CoTCons(S; \calH), \ \text{we have} \ \Lete{\calD}(h) \leq \epsilon.\]
\end{result*}

The key to proving this result will be to establish the following lemma, which relates the performance of any \textit{proper} CoT learner with respect to the CoT error to its performance with respect to the end-to-end error. As before, the intuition is that achieving small CoT error implies \textit{very small} end-to-end error, because the CoT error measures any algorithmic errors, not only errors in the answer (which might be reachable via an incorrect algorithm). This relationship is captured by the CoT information.

Recall that a \textit{proper} learner for $\calH$ is defined as a learning algorithm that returns a predictor \textit{in the hypothesis class}. In general, an improper learner may return any predictor, not necessarily in the hypothesis class (and this can have some computational advantages).

\begin{lemma}[Relating CoT performance to E2E performance via CoT Information]\label{lemma:cot_ete_conversion}
    Any \textit{proper} CoT-learner $\calA: (\calX \times \calY \times \calZ)^* \to \calH$ which achieves CoT-error $\epsilon$  with sample complexity $m_{\calA}^{\CoT}(\epsilon, \delta)$ also achieves end-to-end error $\epsilon$ with sample complexity $m_{\calA}^{\ete}(\epsilon, \delta) \leq m_{\calA}^{\CoT}(\gamma(\epsilon)^{-}, \delta)$, where $\gamma: (0, 1) \to (0,1)$ is defined as
    \[\gamma(\epsilon) := \inf\set{\Lcot{\calD}(h): \ h \in \Deltaete_{\calD}(\epsilon; \calH)}.\]
    Here, $m(\epsilon^{-}, \delta)$ denotes the limit to $\epsilon$ from below. Moreover, $\gamma$ can be related to the CoT information as follows
    \[\max\paren{\frac{\cotinfo(\epsilon; \calH)}{1 + \cotinfo(\epsilon; \calH)}, \epsilon} \leq \gamma(\epsilon) \leq \min(\cotinfo(\epsilon; \calH), 1).\]
\end{lemma}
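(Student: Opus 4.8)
The plan is to prove the two assertions of the lemma separately: first the sample-complexity conversion $m_{\calA}^{\ete}(\epsilon,\delta) \le m_{\calA}^{\CoT}(\gamma(\epsilon)^{-},\delta)$, and then the two-sided estimate of $\gamma$ in terms of $\cotinfo(\epsilon;\calH)$.

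\textbf{Stage 1: the conversion.} The only thing needed here is a deterministic, distribution-level implication. For any $h \in \calH$, if $\Lete{\calD}(h) > \epsilon$ then $h \in \Deltaete_{\calD}(\epsilon;\calH)$, and hence $\Lcot{\calD}(h) \ge \inf\{\Lcot{\calD}(h') : h' \in \Deltaete_{\calD}(\epsilon;\calH)\} = \gamma(\epsilon)$; contrapositively, $\Lcot{\calD}(h) < \gamma(\epsilon)$ forces $\Lete{\calD}(h) \le \epsilon$. It is exactly here that properness of $\calA$ is used: the output lies in $\calH$, so the membership statement $h \in \Deltaete_{\calD}(\epsilon;\calH)$ is meaningful. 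Now fix any $\epsilon' < \gamma(\epsilon)$: drawing $m \ge m_{\calA}^{\CoT}(\epsilon',\delta)$ samples gives, with probability at least $1-\delta$, a hypothesis $\calA(S)$ with $\Lcot{\calD}(\calA(S)) \le \epsilon' < \gamma(\epsilon)$ and therefore $\Lete{\calD}(\calA(S)) \le \epsilon$. Thus $m_{\calA}^{\ete}(\epsilon,\delta) \le m_{\calA}^{\CoT}(\epsilon',\delta)$ for every $\epsilon' < \gamma(\epsilon)$; since $m_{\calA}^{\CoT}(\cdot,\delta)$ is non-increasing in its first argument, letting $\epsilon' \uparrow \gamma(\epsilon)$ yields $m_{\calA}^{\ete}(\epsilon,\delta) \le m_{\calA}^{\CoT}(\gamma(\epsilon)^{-},\delta)$.

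\textbf{Stage 2: relating $\gamma$ to $\cotinfo$.} Write $p(h) := \probunder{x \sim \calD}{\hcot{h}(x) = \hcot{\hstar}(x),\ \hete{h}(x) = \hete{\hstar}(x)}$, so that by definition $\Lcot{\calD}(h) = 1 - p(h)$ and $\relcotinfo(\hstar,h) = -\log p(h)$. Setting $P := \sup_{h \in \Deltaete_{\calD}(\epsilon;\calH)} p(h)$ and using that $t \mapsto -\log t$ is strictly decreasing, both quantities are governed by the same supremum:
\[\gamma(\epsilon) = 1 - P, \qquad \cotinfo(\epsilon;\calH) = -\log P, \qquad\text{hence}\qquad \gamma(\epsilon) = 1 - e^{-\cotinfo(\epsilon;\calH)}.\]
The claimed bound now reduces to elementary inequalities in $t := \cotinfo(\epsilon;\calH) \ge 0$: from $1 + t \le e^{t}$ we get $1 - e^{-t} \le t$, which with the trivial $1 - e^{-t} \le 1$ gives the upper bound $\gamma(\epsilon) \le \min(\cotinfo(\epsilon;\calH),1)$; and from $e^{-t} \le 1/(1+t)$ (again $1+t \le e^{t}$) we get $1 - e^{-t} \ge 1 - \tfrac{1}{1+t} = \tfrac{t}{1+t}$, i.e. $\gamma(\epsilon) \ge \tfrac{\cotinfo(\epsilon;\calH)}{1+\cotinfo(\epsilon;\calH)}$. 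Finally, $\gamma(\epsilon) \ge \epsilon$ because every $h \in \Deltaete_{\calD}(\epsilon;\calH)$ satisfies $\Lcot{\calD}(h) \ge \Lete{\calD}(h) > \epsilon$ (the disagreement event $\{\hcot{h}(x) \neq \hcot{\hstar}(x)\ \text{or}\ \hete{h}(x) \neq \hete{\hstar}(x)\}$ contains $\{\hete{h}(x) \neq \hete{\hstar}(x)\}$), and taking the infimum over this set preserves the inequality. Combining gives $\max\big(\tfrac{\cotinfo(\epsilon;\calH)}{1+\cotinfo(\epsilon;\calH)},\, \epsilon\big) \le \gamma(\epsilon) \le \min(\cotinfo(\epsilon;\calH),1)$.

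\textbf{Main obstacle.} There is no substantial difficulty; the content is the clean identity $\gamma(\epsilon) = 1 - e^{-\cotinfo(\epsilon;\calH)}$ together with the observation that end-to-end error exceeding $\epsilon$ forces CoT error at least $\gamma(\epsilon)$. The only care required is bookkeeping: (i) the one-sided limit in the sample-complexity statement, which leans on monotonicity of $m_{\calA}^{\CoT}(\cdot,\delta)$, and consistency about the strict inequality ``$>\epsilon$'' in the definition of $\Deltaete_{\calD}(\epsilon;\calH)$ matching the one used in $\cotinfo$; and (ii) the degenerate case $\Deltaete_{\calD}(\epsilon;\calH) = \emptyset$ (equivalently $P = 0$ and $\cotinfo(\epsilon;\calH) = +\infty$), in which the statement holds vacuously under the convention $\gamma(\epsilon) = 1$.
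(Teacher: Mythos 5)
Your proposal is correct and follows essentially the same route as the paper: the same contrapositive/contradiction argument (properness ensures membership in $\Deltaete_{\calD}(\epsilon;\calH)$, so sub-$\gamma(\epsilon)$ CoT risk forces end-to-end error at most $\epsilon$), and the same identity $\cotinfo(\epsilon;\calH) = -\log(1-\gamma(\epsilon))$ combined with the elementary bounds $x \leq -\log(1-x) \leq x/(1-x)$. Your handling of the one-sided limit via monotonicity of $m_{\calA}^{\CoT}(\cdot,\delta)$ and the empty-$\Deltaete$ degenerate case is slightly more careful than the paper's, but the substance is identical.
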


\begin{proof}
    Let $\calA$ be a proper CoT learner for $\calH$ (i.e., it returns a hypothesis in $\calH$) with CoT-error sample complexity $m_{\calA}^{\CoT}(\epsilon, \delta)$. Let $m \geq m_{\calA}^{\CoT}(\gamma(\epsilon), \delta)$ and let 
    \[S = \sset{(x_1, y_1, z_1), \ldots, (x_m, y_m, z_m)} \simiid \calD\] be an i.i.d dataset drawn from the distribution $\calD$ . Note that we fold the hypothesis $\hstar$ into $\calD$ for notational convenience, and we have $y_i, z_i = \hstar(x_i)$. By the assumption on the CoT-error sample complexity of $\calA$, we have that with probability at least $1 - \delta$, $\hhat = \calA(S)$ satisfies $\Lcot{\calD}(\hhat) \leq \gamma(\epsilon)^{-} < \gamma(\epsilon)$.

    To show that $\hhat$ has end-to-end error smaller than $\epsilon$, we proceed by contradiction. Suppose we are in the event  $\sset{S: \Lcot{\calD}(\calA(S)) < \gamma(\epsilon)}$ and that $\Lete{\calD}(\hhat) > \epsilon$. This implies $\hhat \in \Deltaete_{\calD}(\epsilon; \calH)$ and hence
    \[\gamma(\epsilon) := \inf\set{\Lcot{\calD}(h): \ h \in \Deltaete_{\calD}(\epsilon, \calH)} \leq \Lcot{\calD}(\hhat) \leq \gamma(\epsilon)^{-} < \gamma(\epsilon).\]
    This yields a contradiction. Thus, on the event $\sset{S: \Lcot{\calD}(\calA(S)) < \gamma(\epsilon)}$, which occurs with probability at least $1 - \delta$, we have $\Lete{\calD}(\hhat) \leq \epsilon$. This proves the first part of the lemma.

    We now proceed to relate $\gamma$ to the CoT information. Note that the CoT information can be written in terms of $\gamma$ as follows
    \begin{align*}
        \cotinfo(\epsilon; \calH) &:= \inf_{h \in \Deltaete_{\calD}(\epsilon; \calH)} \set{-\log \probunder{x, y, z}{h(x) = (y, z)}} \\
        &= - \log \sup_{h \in \Deltaete_{\calD}(\epsilon; \calH)} \paren{1 - \Lcot{\calD}(h)} \\
        &= - \log \paren{1 - \inf_{h \in \Deltaete_{\calD}(\epsilon; \calH)} \Lcot{\calD}(h)} \\
        &= - \log (1 - \gamma(\epsilon)).
    \end{align*}

    The identity $- \log (1-x) \geq x$ gives $\cotinfo(\epsilon; \calH) \geq \gamma(\epsilon)$. The identity $- \log(1-x) \leq \frac{x}{1-x}$ gives $\cotinfo(\epsilon; \calH) \leq \gamma(\epsilon) / (1 - \gamma(\epsilon))$ which can be rearranged to give $\gamma(\epsilon) \geq \cotinfo(\epsilon; \calH) / (1 + \cotinfo(\epsilon; \calH))$. Finally, note that $\gamma(\epsilon) \geq \epsilon$ by definition since $\Lcot{\calD}(h) \geq \Lete{\calD}(h), \forall h$.
\end{proof}

Note that the restriction that the CoT-learning algorithm $\calA$ is \textit{proper} was crucial in the proof above. In particular, we used $\hhat \in \calH$ in order to derive the contradiction.

For a CoT hypothesis class $\calH \subset (\calZ \times \calY)^{\calX}$, recall that we define the CoT loss class over $\calX \times \calY \times \calZ \to \sset{0,1}$ as the $0-1$ class
\[\calLCoT(\calH) := \set{\ellcot_h: (x, y, z) \mapsto \Ind{h(x) \neq (y, z)}: \ h \in \calH}.\]
The complexity of this loss class will appear in our analysis since we will be analyzing learning algorithms that learn with respect to the CoT loss $\ellcot$.

We are now ready to prove the main result.

\begin{proof}[Proof of~\Cref{result:cotcons_cotinfo_infH}]
    By~\Cref{lemma:cot_ete_conversion}, a CoT learner $\calA$ with a sample complexity of $m_{\calA}^{\CoT}(\epsilon, \delta)$ with respect to the CoT error has a sample complexity with respect to the end-to-end error of at most $m_{\calA}^{\ete}(\epsilon, \delta) \leq m_{\calA}^{\CoT}(\gamma(\epsilon)^{-}, \delta)$, where $\gamma$ is defined in the lemma. The CoT-consistency rule enjoys a sample complexity of
    \[m_{\calA}^{\CoT}(\epsilon, \delta) = \calO\paren{\frac{1}{\epsilon} \cdot \paren{\VC(\calLCoT(\calH)) \cdot \log (1/\epsilon) + \log(1/\delta)}}.\]
    For the end-to-end error, this translates to the sample complexity of
    \begin{align*}
        m_{\calA}^{\ete}(\epsilon, \delta) &\leq m_{\calA}^{\CoT}(\gamma(\epsilon), \delta) \leq \calO\paren{\frac{1}{\gamma(\epsilon)} \cdot\paren{\VC(\calLCoT(\calH)) \cdot \log (1/\gamma(\epsilon)) + \log(1/\delta)}} \\
        &\leq \calO \paren{\frac{1 + \cotinfo(\epsilon; \calH)}{\cotinfo(\epsilon; \calH)} \cdot \paren{\VC(\calLCoT(\calH)) \cdot \log \paren{\frac{1 + \cotinfo(\epsilon; \calH)}{\cotinfo(\epsilon; \calH)}} + \log(1/\delta)}}\\
        &= \calO \paren{\paren{\frac{1}{\cotinfo(\epsilon; \calH)} + 1} \cdot \paren{ \VC(\calLCoT(\calH)) \cdot \log \paren{\frac{1}{\cotinfo(\epsilon; \calH)} + 1} + \log(1/\delta)}}.
    \end{align*}
\end{proof}

\aanote{Should we add any more discussion here? e.g., on the appearance of $\VC(\calLCoT(\calH))$? Also, the $+1$ factor in the sample complexity? Is it necessary/real or just an artifact of the analysis technique?}

\subsection{Proof of \texorpdfstring{\Cref{result:coterm_agnostic}}{Result~\ref{result:coterm_agnostic}}}\label{ssec:proof:result:coterm_agnostic}

\begin{result*}[Restatement: Agnostic Learning under CoT-Supervision]
    Let $\calH \subset (\calY \times \calZ)^{\calX}$ be a CoT hypothesis class. For any distribution $\calD$ over $\calX \times \calY \times \calZ$, the CoT-ERM learning rule has a sample complexity of
    \[m(\epsilon, \delta) =  \calO \paren{\frac{\VC(\calLCoT(\calH)) + \log(1/\delta)}{\cotinfoag(\epsilon; \calH)^2}},\]
    where the agnostic version of the CoT information is defined as follows
    \[\cotinfoag(\epsilon; \calH) := \inf\set{\Lcot{\calD}(h) - \Lstarcot: h \in \calH, \Lete{\calD}(h) - \Lstarete \geq \epsilon},\]
    where $\Lstarcot := \inf_{h \in \calH} \Lcot{\calD}(h), \, \Lstarete := \inf_{h \in \calH} \Lete{\calD}(h)$.
    That is, for any $m \geq m(\epsilon, \delta)$, we have that with probability at least $1 - \delta$ over $S \sim \calD^m$, the excess end-to-end risk is bounded as
    \[\forall h \in \CoTERM(S; \calH), \ \text{we have} \ \Lete{\calD}(h) \leq \Lstarete + \epsilon.\]
\end{result*}

Our aim is to analyze the performance of the $\CoTERM$ learning rule, which seeks to minimize the CoT-penalized error. This is a natural learning rule to consider in the CoT-supervised setting, and corresponds to optimization procedures that are implemented in practice in CoT learning. Similar to the realizable setting, the key to proving this learning guarantee is to relate the CoT error of a CoT learner to its end-to-end error. This is established in the following lemma, which is an analogue of~\Cref{lemma:cot_ete_conversion}.

Recall that, for a distribution $\calD$ over $\calX \times \calY \times \calZ$ and a CoT hypothesis class $\calH: \calX \to \calY \times \calZ$, we define the optimal end-to-end and CoT errors achievable by $\calH$ as
\[\Lstarcot := \inf_{h \in \calH} \Lcot{\calD}(h), \ \Lstarete := \inf_{h \in \calH} \Lete{\calD}(h).\]

\begin{lemma}[Relating CoT performance to E2E performance in the Agnostic Setting]\label{lemma:cot_ete_conversion_agnostic}
    Any agnostic \textit{proper} CoT-learner $\calA: (\calX \times \calY \times \calZ)^* \to \calH$ which achieves \textit{excess} CoT-error $\epsilon$  with sample complexity $m_{\calA}^{\CoT}(\epsilon, \delta)$ also achieves \textit{excess} end-to-end error $\epsilon$ with sample complexity $m_{\calA}^{\ete}(\epsilon, \delta) \leq m_{\calA}^{\CoT}(\gamma(\epsilon)^{-}, \delta)$, where $\gamma: (0, 1) \to (0,1)$ is defined as
    \[\gamma(\epsilon) := \cotinfoag(\epsilon; \calH) = \inf\set{\Lcot{\calD}(h) - \Lstarcot: \ h \in \calH, \,\Lete{\calD}(h) - \Lstarete \geq \epsilon}.\]
\end{lemma}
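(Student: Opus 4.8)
The plan is to transplant the argument from the realizable analogue, Lemma~\ref{lemma:cot_ete_conversion}, almost verbatim, replacing raw risks by \emph{excess} risks and $\cotinfo$ by $\cotinfoag$. First I would dispose of the degenerate case: if the feasible set $\set{h \in \calH : \Lete{\calD}(h) - \Lstarete \geq \epsilon}$ is empty, then every $h \in \calH$ already has excess end-to-end error below $\epsilon$ and the statement is vacuous, so assume it is nonempty. Then, fixing $\epsilon,\delta$ and an arbitrary $t < \gamma(\epsilon)$, I would take $m \geq m_{\calA}^{\CoT}(t,\delta)$, draw $S \sim \calD^m$, and invoke the assumed excess-CoT-risk guarantee for $\calA$ together with its properness ($\hhat := \calA(S) \in \calH$) to obtain that, with probability at least $1 - \delta$, $\Lcot{\calD}(\hhat) - \Lstarcot \leq t < \gamma(\epsilon)$.

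Next I would argue by contradiction on this good event. If $\Lete{\calD}(\hhat) - \Lstarete \geq \epsilon$, then $\hhat$ belongs to the feasible set whose infimum defines $\gamma(\epsilon) = \cotinfoag(\epsilon;\calH)$, which forces $\Lcot{\calD}(\hhat) - \Lstarcot \geq \gamma(\epsilon)$ and contradicts the previous bound. Hence on the good event $\Lete{\calD}(\hhat) - \Lstarete < \epsilon$, and since $t < \gamma(\epsilon)$ was arbitrary this yields $m_{\calA}^{\ete}(\epsilon,\delta) \leq \inf_{t < \gamma(\epsilon)} m_{\calA}^{\CoT}(t,\delta) = m_{\calA}^{\CoT}(\gamma(\epsilon)^{-},\delta)$, as claimed.

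The only genuinely delicate point — and the part I would be most careful about — is the $\gamma(\epsilon)^{-}$ bookkeeping: $\gamma(\epsilon)$ is an infimum that need not be attained and $m_{\calA}^{\CoT}(\cdot,\delta)$ need not be left-continuous, so one cannot simply request excess CoT error ``exactly'' $\gamma(\epsilon)$; quantifying over all $t < \gamma(\epsilon)$ is precisely what makes the strict inequality needed for the contradiction available. I would also flag, as a remark, the one place where the realizable argument genuinely fails to carry over: there is no companion lower bound of the form $\gamma(\epsilon) \geq \epsilon$, because although $\Lcot{\calD}(h) \geq \Lete{\calD}(h)$ pointwise, the minimizers of $\Lcot{\calD}$ and $\Lete{\calD}$ over $\calH$ may differ, so $\Lstarcot$ and $\Lstarete$ bear no relation and the excess quantities satisfy no corresponding inequality — indeed $\gamma(\epsilon)$ can vanish, as in the motivating example. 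This is exactly why the agnostic conversion is cleaner to state but weaker in content than its realizable counterpart.

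Finally, to derive Result~\ref{result:coterm_agnostic} from this lemma I would compose it with the standard agnostic VC sample-complexity bound for ERM against the CoT loss class $\calLCoT(\calH)$, namely $m_{\CoTERM}^{\CoT}(\eta,\delta) = \calO\bigl((\VC(\calLCoT(\calH)) + \log(1/\delta))/\eta^{2}\bigr)$, evaluated at $\eta = \gamma(\epsilon) = \cotinfoag(\epsilon;\calH)$, which immediately gives the stated $\calO\bigl((\VC(\calLCoT(\calH)) + \log(1/\delta))/\cotinfoag(\epsilon;\calH)^{2}\bigr)$.
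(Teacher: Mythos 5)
Your proposal is correct and follows essentially the same route as the paper's proof: invoke the excess-CoT-risk guarantee and properness of $\calA$, then derive a contradiction from the definition of $\gamma(\epsilon)$ as an infimum over the set of hypotheses with excess end-to-end risk at least $\epsilon$. Your explicit quantification over $t < \gamma(\epsilon)$ is a slightly more careful rendering of the $\gamma(\epsilon)^{-}$ bookkeeping than the paper's, and your remarks on the empty feasible set and the failure of $\gamma(\epsilon) \geq \epsilon$ match the paper's surrounding discussion.
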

\begin{proof}
    Let $\calA$ be a proper CoT learner for $\calH$ (i.e., it returns a hypothesis in $\calH$) with CoT-error sample complexity $m_{\calA}^{\CoT}(\epsilon, \delta)$. Let $m \geq m_{\calA}^{\CoT}(\gamma(\epsilon)^{-}, \delta)$ and let 
    \[S = \sset{(x_1, y_1, z_1), \ldots, (x_m, y_m, z_m)} \simiid \calD\] be an i.i.d dataset drawn from the distribution $\calD$. By the assumption on the CoT-error sample complexity of $\calA$, we have that with probability at least $1 - \delta$, $\hhat = \calA(S)$ satisfies $\Lcot{\calD}(\hhat) < \Lstarcot + \gamma(\epsilon)$.

    To show that $\hhat$ has end-to-end error smaller than $\epsilon$, we proceed by contradiction. Suppose we are in the event $\sset{S: \Lcot{\calD}(\calS(S)) < \Lstarcot + \gamma(\epsilon)}$ and that the end-to-end error is larger than desired $\Lete{\calD}(\hhat) > \Lstarete + \epsilon$. This implies
    \[\gamma(\epsilon) := \inf\set{\Lcot{\calD}(h) - \Lstarcot: \ h \in \calH, \, \Lete{\calD}(h) \geq \Lstarete + \epsilon} \leq \Lcot{\calD}(\hhat) - \Lstarcot < \gamma(\epsilon).\]
    This yields a contradiction. Thus, on the event $\sset{S: \Lcot{\calD}(\calS(S)) < \Lstarcot + \gamma(\epsilon)}$, which occurs with probability at least $1 - \delta$, we must have $\Lete{\calD}(\calA(S)) \leq \Lstarete + \epsilon$.
\end{proof}

We can now prove our main result, which follows by a similar argument to~\Cref{result:cotcons_cotinfo_infH}.

\begin{proof}[Proof of~\Cref{result:coterm_agnostic}]
    By~\Cref{lemma:cot_ete_conversion_agnostic}, a CoT learner $\calA$ with a sample complexity of $m_{\calA}^{\CoT}(\epsilon, \delta)$ with respect to the CoT error has a sample complexity with respect to the end-to-end error of at most $m_{\calA}^{\ete}(\epsilon, \delta) \leq m_{\calA}^{\CoT}(\gamma(\epsilon)^{-}, \delta)$, where $\gamma(\epsilon) = \cotinfoag(\epsilon; \calH)$ is the agnostic version of the CoT information. The CoT-ERM rule enjoys a sample complexity of
    \[m_{\calA}^{\CoT}(\epsilon, \delta) = \calO\paren{\frac{1}{\epsilon^2} \cdot \paren{\VC(\calLCoT(\calH)) + \log(1/\delta)}}.\]
    For the end-to-end error, this translates to the sample complexity
    \[m(\epsilon, \delta) =  \calO \paren{\frac{\VC(\calLCoT(\calH)) + \log(1/\delta)}{\cotinfoag(\epsilon; \calH)^2}},\]
\end{proof}

\section{Proofs of \texorpdfstring{\Cref{sec:lower_bounds}}{Section~\ref{sec:lower_bounds}}: Lower Bounds}\label{sec:proofs_lower_bounds}

\subsection{Proof of \texorpdfstring{\Cref{result:cotinfo_lowerbound_twopoint}}{Result~\ref{result:cotinfo_lowerbound_twopoint}}}\label{ssec:proof:result:cotinfo_lowerbound_twopoint}

\aanote{Add a bit more of an introduction and discussion here?}
We will break down~\Cref{result:cotinfo_lowerbound_twopoint} into several statements and prove each separately.

\begin{result*}[First Part of~\Cref{result:cotinfo_lowerbound_twopoint}]
  Let $\calH \subset (\calY \times \calZ)^{\calX}$ be a CoT hypothesis class and let $\calD$ be a distribution on $\calX$. Let $x_1, \ldots, x_m \sim \calD$ be an i.i.d sample from $\calD$. For any $\hstar \in \calH$ and $\epsilon > 0$, we have that
  \[m < \frac{\log (1/\delta)}{\cotinfo(\epsilon; \calH)}\]
  implies that with probability at least $\delta$, there exists $h \in \calH$ with end-to-end error at least $\epsilon$ which is indistinguishable from $\hstar$ on this sample.
\end{result*}

\begin{proof}
    
    Fix $\hstar \in \calH$ and $\epsilon \in [0, 1]$. Let $\bar{h} \in \argmin_{h \in \Deltaete_{\calD}(\epsilon; \calH, \hstar)} \relcotinfo(\hstar, h)$. Then, by definition, we have that $\bar{h}$ has end-to-end error at least $\epsilon$ and $\cotinfo(\epsilon; \calH) = \relcotinfo(\hstar, \bar{h})$. Thus, the probability that $\hstar$ and $\bar{h}$ agree on a random input $x$ with respect to both the CoT and E2E behavior can  be expressed as
    \[\probunder{x \sim \calD}{\hcot{\bar{h}}(x) = \hcot{\hstar}(x), \, \hete{\bar{h}}(x) = \hete{\hstar}(x)} = \exp(- \cotinfo(\epsilon; \calH)).\]
    Now, we compute the probability that $\hstar$ and $\bar{h}$ are indistinguishable on the CoT-annotated sample of $m$ points $x_1, \ldots, x_m \simiid \calD$:
    \begin{align*}
    &\probunder{x_1, \ldots, x_m \simiid \calD}{\forall i \in [m], \hcot{\bar{h}}(x_i) = \hcot{\hstar}(x_i), \, \hete{\bar{h}}(x_i) = \hete{\hstar}(x_i)} \\
    &= \paren{\probunder{x \sim \calD}{\hcot{\bar{h}}(x) = \hcot{\hstar}(x), \, \hete{\bar{h}}(x) = \hete{\hstar}(x)}}^m \\
    &= \exp(-m \cdot \cotinfo(\epsilon; \calH)).
    \end{align*}
    
    This occurs with probability at least $\delta$ when
    \[m \leq \frac{\log (1/\delta)}{\cotinfo(\epsilon; \calH)}.\]
\end{proof}

The next lower bound result is based on relating the learning problem to binary hypothesis testing and lower bounding the sample complexity of hypothesis testing via the total variation distance. The basic idea of relating the performance of a statistical estimator to the total variation distance is due to~\citet{lecamConvergenceEstimatesDimensionality1973}. We also point to~\citet{yuAssouadFanoCam1997} for a classic reference on statistical lower bounds, including Le Cam's method, as well as~\citet{polyanskiyInformationTheoryCoding2025} for a modern reference.
\aanote{add further discussion? any other references?}

Henceforth, for a hypothesis $h \in \calH$, we will denote by $P_h \in \calP(\calX \times \calY \times \calZ)$ the distribution over input-CoT-output tuples where the marginal on $\calX$ is the input distribution $\calD$ and the distribution over $(y,z)$ given $x$ is the Dirac measure at $h(x)$.

\begin{result*}[Second Part of~\Cref{result:cotinfo_lowerbound_twopoint}]
    Let $\calA: (\calX \times \calY \times \calZ)^* \to \calH$ be any learning algorithm that maps a dataset $S^m = \sset{(x_i, y_i, z_i)}_{i=1}^{m}$ to a predictor $\hhat$. Suppose there exists $h_1, h_2 \in \calH$ such that $\probunder{x \sim \calD}{\hete{h_1}(x) \neq \hete{h_2}(x)} \geq 2 \epsilon$. Assume that $S^m \sim (\calD \otimes \delta_{(y,z) = \hstar(x)})^{\otimes m}$ the sample size $m$ is upper bounded as 
    \[m < \frac{\log(\frac{1}{2\delta})}{\relcotinfo(h_1, h_2)}.\]
    Then, we must have
    \[\inf_{\hstar \in \calH} \probunder{S^m \sim \hstar}{\Lete{\calD,\hstar}(\calA(S^m)) \geq \epsilon} > \delta.\]
    Moreover, the expected error of any CoT-learning algorithm $\calA$ is lower-bounded as,
    \begin{align*}
        \sup_{\hstar \in \calH} \expectunder{S^m \sim \hstar}{\Lete{\calD, \hstar}(\calA(S^m))} 
        &\geq \frac{1}{2} \sup_{h_1, h_2 \in \calH} \probunder{x \sim \calD}{\hete{h_1}(x) \neq \hete{h_2}(x)} \cdot \exp(-m \cdot \relcotinfo(h_1, h_2)) \\
        &\geq \frac{1}{2} \sup_{\substack{\hstar \in \calH \\ \epsilon > 0}} \epsilon \cdot \exp(-m \cdot \cotinfo(\epsilon; \calH)).
    \end{align*}
\end{result*}
\begin{proof}
    Let us consider the pseudometric on the hypothesis space $\calH$, defined by 
    \[\dete(h_1, h_2) = \probunder{x \sim \calD}{\hete{h_1}(x) \neq \hete{h_2}(x)},\]
    which measures the end-to-end disagreement. Note that $\Lete{\calD}(h) = \dete(h, \hstar)$. Moreover, note that $\dete$ satisfies
    \[\dete(h_1, h_3) \leq \dete(h_1, h_2) + \dete(h_2, h_3).\]
    Let $\calA: (\calX \times \calY \times \calZ)^* \to \calY^{\calX}$ be any learning algorithm. Assume towards a contradiction that
    $\probunder{S \sim P_h^{\otimes m}}{\Lete{\calD}(\calA(S)) \geq \epsilon} \leq \delta, \forall h \in \calH.$
    By assumption, there exists a pair of hypotheses $h_1, h_2$ such that $\dete(h_1, h_2) \geq 2 \epsilon$. Consider the event that the predictor returned by $\calA$ is close to $h_0$ in end-to-end behavior, $\calE := \sset{\dete(h_0, \calA(S)) < \epsilon}$. We will consider the probability of this event when the data is generated by $h_1$ and $h_2$. By the assumption on the performance of the algorithm, we have that the probability of this event under $h_1$ is bounded as
    \[\probunder{S \sim P_{h_1}^{\otimes m}}{\dete(h_0, \calA(S)) < \epsilon} \geq 1 - \delta.\]
    On the other hand, under $S \sim P_{h_2}^{\otimes m}$ note that by the triangle inequality of $\dete$, we have
    \[\dete(h_1, h_2) \leq \dete(h_1, \calA(S)) + \dete(\calA(S), h_1) \iff \dete(h_0, \calA(S) \geq \underbrace{\dete(h_1, h_2)}_{\geq 2 \epsilon} - \underbrace{\dete(\calA(S), h_1)}_{\geq \epsilon \,\withprob\, \leq \delta},\]
    where the first bound is by the assumption on $h_1, h_2$ and the second is by the assumption on the performance of the learning algorithm $\calA$. This then implies that
    \[\probunder{S \sim P_{h_2}^{\otimes m}}{\dete(h_0, \calA(S)) < \epsilon} \leq \delta.\]
    By the definition of the total variation distance, this then implies that the total variation distance between $P_{h_1}^{\otimes m}$ and $P_{h_2}^{\otimes m}$ must be at least
    \[\TV{P_{h_1}^{\otimes m}, P_{h_2}^{\otimes m}} := \sup_{A} \abs{P_{h_1}^{\otimes m}(A) - P_{h_2}^{\otimes m}(A)} \geq P_{h_1}^{\otimes m}(\calE) - P_{h_2}^{\otimes m}(\calE) \geq 1 - 2 \delta.\]
    Thus, to derive a contradiction, we will relate the total variation distance to the relative CoT information and choose $m$ small enough such that $\TV{P_{h_1}^{\otimes m}, P_{h_2}^{\otimes m}} < 1 - 2 \delta$. We compute the total variation distance as follows:
        \begin{align*}
        \TV{P_{h_1}^{\otimes m}, P_{h_2}^{\otimes m}} &:= 
        \frac{1}{2} \sum_{(x_{1:m}, y_{1:m}, z_{1:m})} \abs{P_{h_1}(x_{1:m}, y_{1:m}, z_{1:m}) - P_{h_2}(x_{1:m}, y_{1:m}, z_{1:m})} \\
        &= \frac{1}{2}\sum_{x_{1:m}} \calD(x_{1:m}) \sum_{y_{1:m}, z_{1:m}} \abs{\Ind{(y_{1:m}, z_{1:m}) = h_1(x_{1:m})} - \Ind{(y_{1:m}, z_{1:m}) = h_2(x_{1:m})}} \\
        &\stepa{=} \sum_{x_{1:m}} \calD(x_{1:m})  \Ind{h_1(x_{1:m}) \neq h_2(x_{1:m})} \\
        &= \probunder{x_{1:m} \sim \calD^{\otimes m}}{\exists i \in [m]: h_1(x_i) \neq h_2(x_i)} \\
        &= 1 - \probunder{x \sim \calD}{h_1(x) = h_2(x)}^m \\
        &= 1 - \exp(- m \cdot \relcotinfo(h_1, h_2))
    \end{align*}
    To see step (a), note that the function
    \[\Delta\sset{h_1, h_2}(x_{1:m}, y_{1:m}, z_{1:m}):= \abs{\Ind{(y_{1:m}, z_{1:m}) = h_1(x_{1:m})} - \Ind{(y_{1:m}, z_{1:m}) = h_2(x_{1:m})}}\]
    takes the value $1$ either if $h_1(x_{1:m}) = (y_{1:m}, z_{1:m})$ and $h_2(x_{1:m}) \neq (y_{1:m}, z_{1:m})$ or if $h_2(x_{1:m}) = (y_{1:m}, z_{1:m})$ and $h_1(x_{1:m}) \neq (y_{1:m}, z_{1:m})$. Thus, in the sum $\sum_{y_{1:m}, z_{1:m}}$ we only need to consider values of $y_{1:m}, z_{1:m}$ that agree with at least one of $h_1, h_2$.
 
    To guarantee that $\TV{P_{h_1}, P_{h_2}} < 1 - 2 \delta$, it is enough to have
    \[m < \frac{\log(\frac{1}{2\delta})}{\relcotinfo(h_1, h_2)}.\]
    This proves the first statement. 

    The second statement, in terms of the expected error, can be proven by a an analogous argument and related to the CoT information via the above calculation of the TV distance. In particular, fix any $h_1, h_2 \in \calH$, and consider the predictor returned by the algorithm $\hhat = \calA(S)$. Convert this  predictor to a randomized test as follows
    \begin{align*}
        \htilde = \begin{cases}
            h_1 & \withprob\ \frac{\dete(h_2, \hhat)}{\dete(h_1, \hhat) + \dete(h_2, \hhat)}\\
            h_2 & \withprob\ \frac{\dete(h_1, \hhat)}{\dete(h_1, \hhat) + \dete(h_2, \hhat)}.
        \end{cases}
    \end{align*}
    Under $h_1$, we can lower bound the expected error via the triangle inequality as follows
    \[\expectunder{S \sim P_{h_1}^{\otimes m}}{\dete(\htilde, h_1)} 
    = \dete(h_1, h_2) \expectunder{S \sim P_{h_1}^{\otimes m}}{ \frac{\dete(h_1, \hhat)}{\dete(h_1, \hhat) + \dete(h_2, \hhat)}} \leq \expectunder{S \sim P_{h_1}^{\otimes m}}{\dete(\hhat, h_1)},\]
    where we used the fact that $\dete(h_1, \hhat) + \dete(h_2, \hhat) \geq \dete(h_1, h_2)$. Similarly, under $h_2$, we have
    \[\expectunder{S \sim P_{h_2}^{\otimes m}}{\dete(\htilde, h_2)} 
    = \dete(h_1, h_2) \expectunder{S \sim P_{h_2}^{\otimes m}}{ \frac{\dete(h_2, \hhat)}{\dete(h_1, \hhat) + \dete(h_2, \hhat)}} \leq \expectunder{S \sim P_{h_2}^{\otimes m}}{\dete(\hhat, h_1)}.\]
    Now, consider the prior $\pi = \frac{1}{2}(\delta_{h_1} + \delta_{h_2})$ and let $\hstar \sim \pi$. Then, we have
    \begin{align*}
        \sup_{\hstar \in \calH} \expectunder{S \sim P_{\hstar}^{\otimes m}}{\dete(\calA(S), \hstar)} &\geq \expectunder{\hstar \sim \pi}{\expectunder{S \sim P_{\hstar}^{\otimes m}}{\dete(\calA(S), \hstar)}} \\ 
        &= \frac{1}{2} \cdot \paren{\expectunder{S \sim P_{h_1}^{\otimes m}}{\dete(\calA(S), h_1)} + \expectunder{S \sim P_{h_2}^{\otimes m}}{\dete(\calA(S), h_2)}} \\
        &\geq \dete(h_1, h_2) \cdot \frac{1}{2} \cdot \paren{\probunder{S \sim P_{h_1}^{\otimes m}}{\htilde \neq h_1} + \probunder{S \sim P_{h_2}^{\otimes m}}{\htilde \neq h_2}} \\
        &\geq \frac{\dete(h_1, h_2)}{2} \cdot \paren{1 - \TV{P_{h_1}^{\otimes m}, P_{h_2}^{\otimes m}}},
    \end{align*}
    where the last inequality follows from the minimum average probability of error in binary hypothesis testing (or, equivalently, the supremum representation of the total variation distance).
    Now, using the previous calculation of the total variation distance in terms of the CoT information, we have
    \begin{align*}
    \sup_{\hstar \in \calH} \expectunder{S \sim P_{\hstar}^{\otimes m}}{\dete(\calA(S), \hstar)}
        &\geq\sup_{h_1, h_2 \in \calH} \probunder{x \sim \calD}{\hete{h_1}(x) \neq \hete{h_2}(x)} \cdot \exp(-m \cdot \relcotinfo(h_1, h_2)) \\
        &\geq \sup_{\hstar \in \calH} \sup_{\substack{\epsilon > 0 \\ h \in \Deltaete_{\calD}(\epsilon; \calH)}} \probunder{x \sim \calD}{\hete{\hstar}(x) \neq \hete{h}(x)} \cdot \exp(-m \cdot \relcotinfo(\hstar, h)) \\
        &\stepa{\geq} \sup_{\hstar \in \calH} \sup_{\epsilon > 0} \, \epsilon \cdot \exp\pparen{-m \cdot \inf_{h \in \Deltaete_{\calD}(\epsilon; \calH)} \relcotinfo(\hstar, h)} \\
        &\stepb{=} \sup_{\substack{\hstar \in \calH \\ \epsilon > 0}} \epsilon \cdot \exp\pparen{-m \cdot \cotinfo(\epsilon; \calH)}.
    \end{align*}
    In step (a) we used the fact that $\probunder{x \sim \calD}{\hete{\hstar}(x) \neq \hete{h}(x)} \geq \epsilon, \forall h \in h \in \Deltaete_{\calD}(\epsilon; \calH)$, and in step (b) we used the definition of the CoT information $\cotinfo(\epsilon; \calH) := \inf_{h \in \cotinfo(\epsilon; \calH)} \relcotinfo(h, \hstar)$.
\end{proof}

\subsection{Proof of \texorpdfstring{\Cref{result:lower_bound_fano}}{Result~\ref{result:lower_bound_fano}}}\label{ssec:proof:result:lower_bound_fano}

The next upper bound will use information-theoretic tools to establish a lower bound that scales with the size of the hypothesis space. As with the previous lower bound result, the strategy will be to reduce the learning problem into a hypothesis testing problem. However, unlike the previous results, which considered binary hypothesis testing, here we will consider a reduction to multiple hypothesis testing. The main idea is to test between a finite collection of hypotheses whose minimum end-to-end disagreement is $\epsilon$. If we can show that it is impossible to reliably distinguish between these hypotheses with a given sample size, then this implies that the best learning algorithm must at least incur an end-to-end error proportional to $\epsilon$.

To state our result, we begin by recalling the definition of an $\epsilon$-packing.
\begin{definition*} Let $\bbX$ be a set and let $d$ be a (pseudo)metric. A subset $\sset{x_1, \ldots, x_M} \subset \bbX$ is called an $\epsilon$-packing of $\bbX$ with respect to $d$ if $\min_{i \neq j} d(x_i, x_j) \geq \epsilon$. The packing number is defined as the size of the maximum packing, $M(\epsilon; \bbX, d) := \max\sset{m: \exists \ \text{$\epsilon$-packing of  $\bbX$ of size $m$}}$.
\end{definition*}

The main information-theoretic tool we will use will be Fano's inequality~\citep{yuAssouadFanoCam1997,coverElementsInformationTheory2006}, stated below.
\begin{lemma*}[Fano's Inequality]
    Let $W \to X \to Y \to \hat{W}$ be a Markov chain, and assume $W \sim \Unif([M])$. Then,
    \[P_e := \prob{W \neq \hat{W}} \geq 1 - \frac{I(X; Y) + h_b(P_e)}{\log M} \geq 1 - \frac{I(X; Y) + \log 2}{\log M}.\]
\end{lemma*}

Similar to the previous section, for $h \in \calH$, we will denote by $P_h$ the distribution on $\calX \times \calY \times \calZ$ induced by the hypothesis $h$. As before, the marginal over $\calX$ is $\calD$ for all $P_h$. However, unlike the previous section, $P_h(y, z \ggiven x)$ is not a Dirac measure since to first passes through the noisy channel $Q$. The distribution $P_h$ is defined as
\[P_h(x, y, z) = \calD(x) \cdot Q(y, z \ggiven h(x)).\]

\aawarning{What is the role or interpretation of replacing $I(\hstar; S)$ with $\sup_{\pi} I(\hstar; S)$? This gives us a looser bound? Is it more interpretable? Can it be related to $\cotinfo(\epsilon)$? It seems, no, not directly? Seems worth discussing. In the construction of the packing, the worst-case packing is one that maximizes $\dete$ (i.e., $\epsilon$-seperated) while minimizing $\cotinfo(h_1, h_2)$. Not all packings are equally hard.}
We are now ready to prove the result.
\begin{result*}[Restatement of~\Cref{result:lower_bound_fano}]
    Let $\calH \subset (\calY \times \calZ)^{\calX}$ be a CoT  hypothesis class and let $\calD$ be a distribution over $\calX$. Suppose that $x_1, \ldots, x_m \sim \calD$. Let $Q \in \calP(\calY \times \calZ \ggiven \calY \times \calZ)$ be a noisy channel from $h(x) = (y, z)$ to observations $\bar{y}, \bar{z}$. Let $C_Q = \max_{a,b} \KL{Q(\cdot \ggiven a)}{Q(\cdot \ggiven b)}$ be the capacity factor of the channel. The learner observes the noisy sample $S = \sset{(x_i, \bar{y}_i, \bar{z}_i)}_{i=1}^{m}$. Define the pseudo-metric $\dete(h_1, h_2) = \pprobunder{x}{\hete{h_1}(x) \neq \hete{h_2}(x)}$, and let $M(\epsilon; \calH, \dete)$ be the $\epsilon$-packing number of $\calH$ with respect to this pseudo-metric. Then, for any algorithm $\calA$ observing the CoT-supervised sample $S$ of size $m$, the probability of having large end-to-end error is lower bounded as 
    \begin{equation*}
        \sup_{\hstar \in \calH} \probunder{S \sim P_{\hstar}^{\otimes m}}{\Lete{\calD}(\calA(S)) \geq \frac{\epsilon}{2}} \geq 1 - \frac{m \cdot C_Q \cdot  \sup_\pi \expectunder{h_1, h_2 \sim \pi}{\relcotinfo(h_1, h_2)} + \log 2}{\log M(\calH, \dete, \epsilon)}.
    \end{equation*}
\end{result*}
\begin{proof}
    Let $\calH' := \sset{h_1, \ldots, h_M} \subset \calH$ be an $\epsilon$-packing of $\calH$ with respect to the end-to-end distance $\dete(h_1, h_2) = \pprobunder{x \sim \calD}{\hete{h_1}(x) \neq \hete{h_2}(x)}$, where $M = M(\epsilon; \calH, \dete)$. Consider the prior distributed uniformly on this packing, $\pi = \Unif(\sset{h_1, \ldots, h_M})$. For any learning algorithm $\calA: (\calX \times \calY \times \calZ)^* \to \calH$, consider the modified algorithm $\calA': (\calX \times \calY \times \calZ)^* \to \calH'$ which projects $\calA$ onto the packing $\calH'$. That is,
    \[\calA'(S) := \argmin_{h \in \calH'} \dete(\calA(S), h).\]
    The test error of $\calA'$ can be related to the test error of $\calA$ via the geometry of $\calH$ under the pseudometric $\dete$. In particular, letting $\hhat = \calA(S)$ and $\htilde = \calA'(S)$, we have that for all $h \in \calH'$, 
    \[\dete(h, \htilde) \leq \dete(h, \hhat) + \dete(\hhat, \htilde) \leq 2 \, \dete(h, \hhat),\]
    where we used $\dete(\hhat, \htilde) \leq \dete(h, \hhat), \forall h \in \calH'$, which follows by the definition of $\htilde$ as the projection of $\hhat$ onto $\calH'$. Thus, we have that, for any $h \in \calH'$, $\dete(h, \htilde) \geq \epsilon \implies \dete(h, \hhat) \geq \epsilon/2$. Also, note that $h \neq \htilde \implies \dete(h, \htilde) \geq \epsilon$. Thus, we have,
    \[\prob{h \neq \htilde} \leq \prob{\dete(h, \htilde) \geq \epsilon} \leq \prob{\dete(h, \hhat) \geq \epsilon/2}.\]
    By Fano's inequality, we can lower bound $\pprob{h \neq \htilde}$, which in turn implies the following lower bound on the probability of $\calA$ having large end-to-end error,
    \begin{align*}
        \probunder{\hstar \sim \pi, S \sim P_{\hstar}^{\otimes m}}{\dete(\hstar, \calA(S)) \geq \epsilon/2} &\geq \probunder{\hstar \sim \pi, S \sim P_{\hstar}^{\otimes m}}{\dete(\hstar, \calA'(S)) \geq \epsilon} \\
        &\geq \probunder{\hstar \sim \pi, S \sim P_{\hstar}^{\otimes m}}{\hstar \neq \calA'(S)} \\
        &\geq 1 - \frac{I(\hstar; S) + \log 2}{\log M(\epsilon; \calH, \dete)} \\
        &\geq 1 - \frac{\sup_{\pi \in \calP(\calH)} I(\hstar; S) + \log 2}{\log M(\epsilon; \calH, \dete)}.
    \end{align*}
    The first two inequalities are just restating the implications above, the third inequality is Fano's inequality, and the last inequality simply uses $I(\hstar; S) \leq \sup_{\pi} I(\hstar; S)$. That is, we bound the mutual information under the uniform prior over the packing, $\pi = \Unif(\calH')$, by the supremum of the mutual information over all priors (i.e., the capacity). 

    Now, we compute the mutual information $I(\hstar; S)$ and relate it to the CoT information. First, let $\bar{P} = \expectunder{h \in \pi}{P_h}$ denote the mixture of $P_h$ under $\pi$, and note that
    \[I(\hstar, S) = \expectunder{h \sim \pi}{\KL{P_h^{\otimes m}}{\bar{P}^{\otimes m}}} \leq \expectunder{h_1, h_2 \sim \pi}{\KL{P_{h_1}^{\otimes m}}{P_{h_2}^{\otimes m}}} = m \cdot \expectunder{h_1, h_2 \sim \pi}{\KL{P_{h_1}}{P_{h_2}}},\]
    where the inequality follows from the convexity of the KL divergence in the second argument and Jensen's inequality, and the last equality is by the chain rule for the KL divergence.

    Now, let us compute the KL divergence between the distributions induced by a pair of hypotheses and relate it to the relative CoT information between them. For convenience, let us fold in the output into the CoT and use a bold $\z$ to denote $\z = (y, z)$.
    \begin{align*}
        \KL{P_{h_1}}{P_{h_2}} &= \expectunder{x, \z \sim P_{h_1}}{\log \frac{P_{h_1}(x, \z)}{P_{h_2}(x, \z)}} 
        = \expectunder{x, \z \sim P_{h_1}}{\log \frac{\calD(x) Q(\z \ggiven h_1(x))}{\calD(x) Q(\z \ggiven h_2(x))}}\\
        &= \expectunder{x \sim \calD}{\sum_{\z \in \calY \times \calZ} Q(\z \ggiven h_1(x)) \log \frac{Q(\z \ggiven h_1(x))}{Q(\z \ggiven h_2(x))}} \\
        &= \expectunder{x \sim \calD}{\Ind{h_1(x) = h_2(x)}\sum_{\z \in \calY \times \calZ} Q(\z \ggiven h_1(x)) \log \frac{Q(\z \ggiven h_1(x))}{Q(\z \ggiven h_2(x))}} \\
        &\quad + \expectunder{x \sim \calD}{\Ind{h_1(x) \neq h_2(x)}\sum_{\z \in \calY \times \calZ} Q(\z \ggiven h_1(x)) \log \frac{Q(\z \ggiven h_1(x))}{Q(\z \ggiven h_2(x))}} \\
        &\stepa{=} \expectunder{x \sim \calD}{\Ind{h_1(x) \neq h_2(x)}\sum_{\z \in \calY \times \calZ} Q(\z \ggiven h_1(x)) \log \frac{Q(\z \ggiven h_1(x))}{Q(\z \ggiven h_2(x))}} \\
        &\stepb{\leq} \expectunder{x \sim \calD}{\Ind{h_1(x) \neq h_2(x)} \max_{\z_1, \z_2 \in \calY \times \calZ} \sum_{\z \in \calY \times \calZ} Q(\z \ggiven \z_1) \log \frac{Q(\z \ggiven \z_1)}{Q(\z \ggiven \z_2)}} \\
        &\stepc{=} C_Q \cdot \probunder{x \sim \calD}{h_1(x) \neq h_2(x)} \\
        &\stepd{\leq} C_Q \cdot - \log \probunder{x \sim \calD}{h_1(x) = h_2(x)}\\
        &\stepd{=} C_Q \cdot \relcotinfo(h_1, h_2).
    \end{align*}
    Step (a) follows by noting that $\sum_{\z} Q(\z \ggiven h_1(x)) \log \frac{Q(\z) \ggiven h_1(x)}{Q(\z \ggiven h_2(x))} = 0$ when $h_1(x) = h_2(x)$. Steps (b) and (c) are simply bounding the KL divergence between the observations under two different hypotheses that differ by the capacity of the channel, $C_Q := \max_{\z_1, \z_2} \KL{Q(\cdot \ggiven \z_1)}{Q(\cdot \ggiven \z_2)}$. Step (d) uses the identity $x \leq - \log (1 -x)$, and step (d) is the definition of the relative CoT information between two hypotheses.  

    Plugging this into the previous bound proves the result.
    \begin{align*}
    \sup_{\hstar \in \calH} \probunder{S \sim P_{\hstar}^{\otimes m}}{\Lete{\calD}(\calA(S)) \geq \frac{\epsilon}{2}} 
    &\geq \probunder{\hstar \sim \pi, S \sim P_{\hstar}^{\otimes m}}{\dete(\hstar, \calA(S)) \geq \epsilon/2} \\
    &\geq 1 - \frac{m \cdot C_Q \cdot  \sup_\pi \expectunder{h_1, h_2 \sim \pi}{\relcotinfo(h_1, h_2)} + \log 2}{\log M(\calH, \dete, \epsilon)}.
\end{align*}

\end{proof}

\aawarning{An alternate approach is to bound the capacity as follows, using the interpretation of the capacity as a KL-radius: 
$C(\calH) := \sup_\pi I(\hstar; S) = \inf_Q \sup_{h \in \calH} \KL{P_h^{\otimes m}}{Q} \leq \sup_{h \in \calH} \KL{P_h^{\otimes m}}{Q}$, for any choice of $Q$. Is this helpful? Can this be used to relate this $\cotinfo(\epsilon)$?
}

In particular, this result implies that when
\[m \leq \frac{\log M(\calH, \dete, \epsilon)}{2 \cdot \paren{C_Q \cdot  \sup_\pi \expectunder{h_1, h_2 \sim \pi}{\relcotinfo(h_1, h_2)} + \log 2}},\]
the probability that the error is more than $\epsilon/2$ is at least $1/2$,
\[\sup_{\hstar \in \calH} \probunder{S \sim P_{\hstar}^{\otimes m}}{\Lete{\calD}(\calA(S)) \geq \frac{\epsilon}{2}}  \geq \frac{1}{2}.\]

\aawarning{Are there more interesting things to be said about the role of the channel $Q$? E.g., what types of things, unique to the CoT setting it can model?}

Now, let us discuss the role of the noisy channel in the setting of this result. The noisy channel models noise in the learning process. For example, errors in human-created CoT annotations, errors in the output labels, or any other type of noise. For simplicity, let us consider a symmetric channel over $\calY \times \calZ$ parameterized by an error level $e$ and defined as 
\[Q(\cdot \ggiven \z) = (1 - e) \cdot \delta_x + e \cdot \Unif(\calY \times \calZ).\]
We can compute the channel capacity factor $C_Q$ for this channel. Let $\x \neq \y, \x, \y \in \calY \times \calZ$ be two different symbols to be transmitted through the channel, and suppose the error level is non-zero, $e \in (0, 1]$. For convenience, denote $\abs{\calY \times \calZ} = N$.
\begin{align*}
&\KL{Q(\cdot \ggiven \x)}{Q(\cdot \ggiven \y)} \\
&= \sum_{\z} Q(\z \ggiven \x) \log \frac{Q(\z \ggiven \x)}{Q(\z \ggiven \y)} \\
&= Q(\x \ggiven \x) \log \frac{Q(\x \ggiven \x)}{Q(\x \ggiven \y)} + Q(\y \ggiven \x) \log \frac{Q(\y \ggiven \x)}{Q(\y \ggiven \y)} + \sum_{\z \neq \x, \z \neq \y} Q(\z \ggiven \x) \log \frac{Q(\z \ggiven \x)}{Q(\z \ggiven \y)} \\
&= \left(1 - e + \frac{e}{N}\right) \log \frac{1 - e + e/N}{e/N} + \left(\frac{e}{N}\right) \log \frac{e/N}{1 - e + e/N} + \sum_{\z \neq \x, \z \neq \y} \left(\frac{e}{N}\right) \log \frac{e/N}{e/N} \\
&= \left(1 - e + \frac{e}{N}\right) \log \left( \frac{1 - e + e/N}{e/N} \right) - \left(\frac{e}{N}\right) \log \left( \frac{1 - e + e/N}{e/N} \right) \\
&= (1 - e) \log \left(1 + \frac{N(1-e)}{e} \right)
\end{align*}

Intuitively, this is a decreasing function in the error level $e$, decreasing towards $0$ as $e \to 1$. This corresponds to the fact that it is harder to distinguish between hypotheses when the observations are more noisy. For $e = 1/ 100$ and $\aabs{\calY \times \calZ} = 1,000$, the capacity factor is approximately $C_Q \approx 11.39$.

\begin{figure}[h]
    \centering
    \begin{tikzpicture}
    \begin{axis}[ 
        declare function={ N=1000; }, 
        xlabel={$e$},                                     
        ylabel={$C_Q$},                                   
        xmode=log,                                        
        xmin=0.0001,                                       
        xmax=1,                                           
        ymin=0,                                           
        domain=0.0001:0.999,                               
        samples=150,                                      
        grid=both,                                        
        legend pos=north east,                            
        axis lines=left,                                  
        width=4in,                                       
        height=2.5in,                                       
    ] 
        
        \addplot [RoyalBlue, thick] {(1-x) * ln(1 + N*(1-x)/x)}; 

        
    \end{axis}
\end{tikzpicture}
    \caption{Capacity factor of $Q$ as a function of error level for $\aabs{\calY \times \calZ} = 1, 000$}
    \label{fig:capacity_factor}
\end{figure}
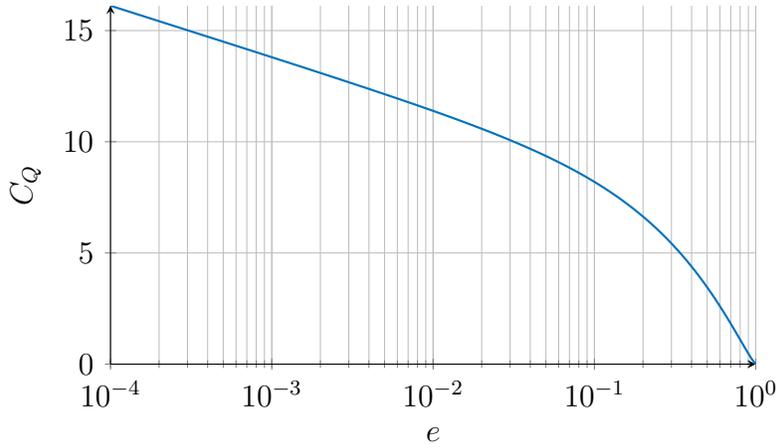
\aanote{I'm including this figure just to give us an idea of what $C_Q$ looks like. Not sure if this is actually necessary to include in the paper?}

\section{Other Topics}\label{sec:other_topics}

\subsection{Learning with Mixed Supervision}\label{ssec:mixed_supervision}
In many situations, CoT training examples may be difficult or expensive to obtain, for example, because they require manual human annotation. On the other hand, input-output examples without CoT annotation might be much more readily available. In such cases, one might have a dataset that includes a large number of end-to-end input-output examples and a small number of CoT-annotated examples. What types of learning guarantees can we establish in such a setting?

The following result, an extension of~\Cref{result:cotinfo_ete_learning}, analyzes the sample complexity of the consistency rule when applied to $m_{\ete}$ end-to-end examples and $m_{\CoT}$ CoT examples.

\begin{result}[Learning with Mixed Datasets]\label{result:cotinfo_ete_learning_mixed}
  Let $\calH \subset (\calZ \times \calY)^{\calX}$ be a finite CoT hypothesis class, and let $\calD$ be a distribution over $\calX$. Consider an i.i.d dataset, $S = S_{\ete} \cup S_{\cot}$, consisting of $m_{\ete}$ input-output examples and $m_{\cot}$ CoT-annotated examples
  \[S_{\ete} = \sset{(x_i^{\ete}, \hete{\hstar}(x_i^\ete))}_{i \in m_{\ete}}, S_{\cot} = \sset{(x_i^{\ete}, \hcot{\hstar}(x_i^{\cot}), \hete{\hstar}(x_i^{\cot}))}_{i \in m_{\cot}},\]
  where $x_1^{\ete}, \ldots, x_{m_\ete}^{\ete}, x_1^{\cot}, \ldots, x_{m_{\cot}}^{\cot} \simiid \calD$. Suppose the number of end-to-end examples is $\gamma$ times the number of CoT examples, so $m_{\ete} = \gamma \cdot m_{\cot}$ and let $m_{\cot} \equiv m$.
  Then, the $\ete-\CoT$-consistency rule has sample complexity with respect to the end-to-end error of
  \begin{equation*}
    m(\epsilon, \delta) = \frac{\log \abs{\calH} + \log(1 / \delta)}{\gamma \cdot \epsilon + \cotinfo(\epsilon; \calH)}.
  \end{equation*}
  That is, for $m \geq m(\epsilon, \delta)$, with probability at least $1 - \delta$ over the draw of $S = S_{\ete} \cup S_{\cot}$,
  \begin{equation*}
    \forall h \in \calH \suchthat \empiricaletecotrisk{S}(h) = 0, \ \text{we have} \ \Lete{\calD}(h) \leq \epsilon.
  \end{equation*}
\end{result}
\begin{proof}
  We would like to bound the probability of the bad event
  \[\sset{\exists h \in \calH : \Lete{\calD}(h) > \epsilon,\, \empiricaletecotrisk{S}(h) = 0}\]
  over $x_1^{\ete}, \ldots, x_{m_\ete}^{\ete}, x_1^{\cot}, \ldots, x_{m_{\cot}}^{\cot} \simiid \calD$.
  Fix any $h \in \calH$ with end-to-end error $\Lete{\calD}(h) > \epsilon$ (i.e., $h \in \Deltaete_{\calD}(\epsilon; \calH, \hstar)$). We bound the probability that $h$ is consistent with $S = S_{\ete} \cup S_{\cot}$, $h \in \CoTCons(S; \calH) = \sset{h \in \calH : \empiricaletecotrisk{S}(h) = 0}$, as follows
  \begin{align*}
    &\probunder{S = S_{\ete} \cup S_{\cot}}{h \in \CoTCons(S; \calH)} \\ 
    &= \probunder{S_{\ete}}{\forall i \in [m_{\ete}],\ \hete{h}(x_i^{\ete}) = \hete{\hstar}(x_i^{\ete})} \\
    &\qquad \cdot \probunder{S_{\cot} }{\forall i \in [m_{\cot}],\ \hcot{h}(x_i^{\cot}) = \hcot{\hstar}(x_i^{\cot}),\,\hete{h}(x_i^{\cot}) = \hete{\hstar}(x_i^{\cot})} \\
    &= \probunder{x \sim \calD}{\hete{h}(x_i) = \hete{\hstar}(x)}^{m_{\ete}} \cdot \probunder{x \sim \calD}{\hcot{h}(x) = \hcot{\hstar}(x_i),\, \hete{h}(x_i) = \hete{\hstar}(x)}^{m_{\cot}} \\
    &\stepa{\leq} \paren{1 - \epsilon}^{m_{\ete}} \cdot \paren{\exp\paren{- \relcotinfo(\hstar, h)}}^{m_{\cot}} \\
    &\stepb{\leq} \exp\paren{- m_{\ete} \cdot \epsilon} \cdot \paren{\exp\paren{- \relcotinfo(\hstar, h)}}^{m_{\cot}} \\
    &\stepc{\leq} \exp\paren{- m_{\ete} \cdot \epsilon} \cdot \exp\paren{-m_{\cot} \cdot \cotinfo(\epsilon; \calH)} \\
    &= \exp\paren{- m_{\ete} \cdot \epsilon - m_{\cot} \cdot \cotinfo(\epsilon; \calH)},
  \end{align*}
  where step (a) applies $\Lete{\calD}(h) > \epsilon$ (since $h \in \Deltaete_{\calD}(\epsilon; \calH, \hstar)$) for the first factor and uses the definition of $\relcotinfo(\hstar, h)$ for the second factor, step (b) is the identity $\log(1 - x) \leq -x$ for $x \in (0,1)$, and step (c) is by the definition of $\cotinfo(\epsilon; \calH)$ and the fact that $h \in \Deltaete_{\calD}(\epsilon; \calH, \hstar)$.

  Now, we write $m_{\ete} = \gamma \cdot m$, $m_{\cot} = m$, and set $m = \frac{\log \aabs{\calH} + \log (1/\delta)}{\gamma \cdot \epsilon + \cotinfo(\epsilon; \calH)}$. This guarantees that the probability that any fixed hypothesis with error larger than $\epsilon$ is consistent with $S = S_{\ete} \cup S_{\cot}$ is at most
  \[\probunder{S = S_{\ete} \cup S_{\cot}}{h \in \CoTCons(S; \calH)} \leq \frac{\delta}{\abs{\calH}}.\]
  Applying a union bound over $\calH$ then shows that
  \[\probunder{S = S_{\ete} \cup S_{\cot}}{\exists h \in \calH : \Lete{\calD}(h) > \epsilon,\, \empiricaletecotrisk{S}(h) = 0} \leq \delta.\]

\end{proof}

\subsection{CoT Learning with Inductive Priors}\label{ssec:inductive_priors}
Encoding prior knowledge about solution structure is critical for learning complex functions, such as those representing multi-step reasoning processes, particularly from limited data. This is especially relevant in chain-of-thought settings, which are often applied to learning such reasoning processes. The chain-of-thought trajectories can be viewed as additional supervision for the intermediate steps of an algorithm, helping to align the learner to the ground-truth algorithm, which may otherwise be very difficult to learn if only input-output examples are observed.

A key idea in learning algorithms is the so-called \textit{Minimum Description Length} (MDL) principle. This encodes a prior or an inductive bias in the learner that favors hypotheses that have a small description length (e.g., thought of as the length of program code or the size of a Turing machine's state space). This is also related to the notion of algorithmic complexity in algorithmic information theory~\citep{solomonoff1964formal,kolmogorov1965three}.

In this section, we consider a minimum description length type of learning rule for the chain-of-thought setting. This also provides an extension of~\Cref{result:cotinfo_ete_learning} to CoT hypothesis classes that are \textit{countably infinite}. 

For a prior $p$ over a hypothesis class $\calH$, we define the \textit{chain-of-thought} MDL rule corresponding to the prior $p$ as
\begin{equation}\label{eq:aut_MDL}
  \MDL_p^{\CoT}(S; \calH) = \argmax_{h \in \CoTCons(S; \calH)} p(h).
\end{equation}
That is, given a CoT dataset $S$, $\MDL_p^{\CoT}(S; \calH)$ selects the hypothesis that maximizes the prior $p$ among hypotheses that are CoT-consistent with $S$. One way to define such a prior $p$ over $\calH$ is through a \textit{prefix-free description language} $d: \calH \to \sset{0,1}$ which maps a hypothesis to a bitstring description. The prior $p$ can then be defined as $p(h) = 2^{- |d(h)|}$, in which case $\MDL_p^{\CoT}(S; \calH) = \argmin_{h \in \CoTCons(S; \calH)} |d(h)|$ selects the minimum-description CoT-consistent hypothesis.

The following result provides a learning guarantee for $\MDL_p^{\CoT}$ in terms of the likelihood of $\hstar$ under the prior $p$ and the CoT-information metric $\cotinfo(\epsilon; \calH)$.

\begin{result}[Learning with Chain-of-Thought and MDL]\label{result:cotinfo_ete_learning_mdl}
  Let $\calH$ be a \textit{countable} autoregressive hypothesis class and consider a prior $p$ over $\calH$. Let $S$ be an i.i.d. dataset of $m$ examples drawn from a distribution $\calD$ over $\calX$. Then, with probability at least $1-\delta$ over the draw of $S$, any CoT-consistent hypothesis $h \in \CoTCons(S; \calH)$ has its end-to-end error bounded by
  \[\Lete{\calD}(h) \leq \epsilon_h := \inf \set{\epsilon > 0: \cotinfo(\epsilon; \calH) \geq \frac{\log (1 / p(h)) + \log(1 / \delta)}{m}}\]
  This in turn implies a sample complexity with respect to the \textit{end-to-end} error for the autoregressive MDL rule $\MDL_p^{\CoT}$ of
  \[m(\epsilon, \delta) = \calO\paren{\frac{\log \frac{1}{p(\hstar)} + \log\frac{1}{\delta}}{\cotinfo(\epsilon; \calH)}}.\]
  That is, for $m \geq m(\epsilon, \delta)$, with probability at least $1 - \delta$ over $S = \sset{\x_1, \ldots, \x_m} \simiid \calD$,
  \begin{equation*}
    \forall h \in \MDL_p^{\CoT}(S; \calH) \ \text{we have} \ \Lete{\calD}(h) \leq \epsilon.
  \end{equation*}
\end{result}
\begin{proof}
  Following the argument in the proof of~\Cref{result:cotinfo_ete_learning}, for any fixed $h \in \calH$ and any $\epsilon_h > 0$, the probability that $h$ is CoT-consistent on $S$, $h \in \CoTCons(S; \calH) = \sset{h \in \calH : \empiricalcotrisk{S}(h) = 0}$ yet has end-to-end error $\Lete{\calD}(h) > \epsilon_h$, is bounded by
  \begin{align*}
    \probunder{S \sim \calD^{\otimes m}}{\empiricalcotrisk{S}(h) = 0, \Lete{\calD}(h) > \epsilon_h} \leq \exp(- m \cdot \relcotinfo(\hstar, h)) \leq \exp(- m \cdot \cotinfo(\epsilon_h; \calH)).
  \end{align*}
  For each $h \in \calH$, we will target an end-to-end error $\epsilon_h$ in a prior-dependent manner such that
  \[\probunder{S \sim \calD^{\otimes m}}{\empiricalcotrisk{S}(h) = 0, \Lete{\calD}(h) > \epsilon_h} \leq \delta p(h).\]
  This occurs if $\epsilon_h$ is chosen such that
  \begin{align*}
    &\exp(- m \cdot \cotinfo(\epsilon_h; \calH)) \leq \delta p(h) \\
    \iff& \cotinfo(\epsilon_h; \calH) \geq \frac{\log (1 / p(h)) + \log(1 / \delta)}{m}.
  \end{align*}
  Thus, we define the target error $\epsilon_h$ for $h$ as
  \begin{equation*}
    \epsilon_h := \inf \set{\epsilon > 0: \cotinfo(\epsilon; \calH) \geq \frac{\log (1 / p(h)) + \log(1 / \delta)}{m}}.
  \end{equation*}
  Note that this can be viewed in terms of the generalized inverse $(\cotinfo(\cdot; \calH))^{-}$, where the generalized inverse of a function $f$ is defined as $f^-(x) = \inf\sset{y: f(y) > x}$. This is well-defined since $\cotinfo(\cdot; \calH)$ is an increasing function by~\Cref{lemma:cotinfo_props}.

  Now, by a union bound, we have that the probability that \textit{any} CoT-consistent hypothesis exceeds its target end-to-end error is bounded by
  \begin{align*}
    &\probunder{S \sim \calD^{\otimes m}}{\exists h \in \calH: \empiricalcotrisk{S}(h) = 0, \Lete{\calD}(h) > \epsilon_h} \\
    &\leq \sum_{h \in \calH} \probunder{S}{h \in \CoTCons(S; \calH), \Lete{\calD}(h) > \epsilon_h} \\
    &\leq \sum_{h \in \calH} \delta p(h) = \delta.
  \end{align*}
  Since $\hstar \in \CoTCons(S; \calH)$, we have that $p(\hstar) \leq p(\MDL_p^{\CoT}(S; \calH))$ by definition of $\MDL_p^{\CoT}$. This, in turn, implies that
  \[\Lete{\calD}(\MDL_p^{\CoT}(S; \calH)) \leq \inf \set{\epsilon > 0: \cotinfo(\epsilon; \calH) \geq \frac{\log (1 / p(\hstar)) + \log(1 / \delta)}{m}},\]
  with probability at least $1 - \delta$. Noting the property that $f(x) \geq y \implies x \geq f^-(y)$ for generalized inverses, we obtain a sample complexity of
  \[m(\epsilon, \delta) = \calO\paren{\frac{\log \frac{1}{p(\hstar)} + \log\frac{1}{\delta}}{\cotinfo(\epsilon; \calH)}}.\]
\end{proof}

\begin{corollary}
  If the prior $p$ is defined as $p(h) = 2^{- \aabs{d(h)}}$ in terms of a prefix-free description language $d: \calH \to \sset{0,1}^*$ satisfying Kraft's inequality $\sum_h 2^{-\aabs{d(h)}} \leq 1$, then the sample complexity of autoregressive MDL rule $\MDL_p^{\CoT}$ satisfies
  \[m(\epsilon, \delta) = \calO\paren{\frac{\aabs{d(\hstar)} + \log\frac{1}{\delta}}{\cotinfo(\epsilon; \calH)}}.\]
\end{corollary}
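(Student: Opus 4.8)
The plan is to simply instantiate \Cref{result:cotinfo_ete_learning_mdl} with the particular prior $p(h) = 2^{-\aabs{d(h)}}$ and track how the $\log(1/p(\hstar))$ term simplifies. First I would observe that Kraft's inequality $\sum_{h \in \calH} 2^{-\aabs{d(h)}} \le 1$ guarantees that $p$ is a (possibly sub-normalized) probability assignment on $\calH$, which is exactly what the union-bound step in the proof of \Cref{result:cotinfo_ete_learning_mdl} requires: there we only needed $\sum_{h} \delta\, p(h) \le \delta$, and $\sum_h p(h) \le 1$ suffices. Hence \Cref{result:cotinfo_ete_learning_mdl} applies verbatim, and moreover note that in this case $\MDL_p^{\CoT}(S;\calH) = \argmax_{h \in \CoTCons(S;\calH)} 2^{-\aabs{d(h)}} = \argmin_{h \in \CoTCons(S;\calH)} \aabs{d(h)}$, so the MDL rule is literally selecting the shortest-description CoT-consistent hypothesis.

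Next I would substitute $p(\hstar) = 2^{-\aabs{d(\hstar)}}$ into the sample complexity bound of \Cref{result:cotinfo_ete_learning_mdl}, giving
\[
m(\epsilon,\delta) = \calO\paren{\frac{\log(1/p(\hstar)) + \log(1/\delta)}{\cotinfo(\epsilon;\calH)}} = \calO\paren{\frac{\aabs{d(\hstar)}\log 2 + \log(1/\delta)}{\cotinfo(\epsilon;\calH)}} = \calO\paren{\frac{\aabs{d(\hstar)} + \log(1/\delta)}{\cotinfo(\epsilon;\calH)}},
\]
absorbing the constant $\log 2$ into the $\calO(\cdot)$. This yields the claimed bound.

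There is essentially no obstacle here, as the corollary is a direct specialization; the only point requiring a moment's care is checking that the sub-normalization permitted by Kraft's inequality (an inequality rather than an equality) does not break the union bound in the parent result, which it does not, since the bound $\sum_h \delta\, p(h) \le \delta$ is all that is used.
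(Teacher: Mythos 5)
Your proposal is correct and matches the paper's (implicit) argument exactly: the corollary is a direct instantiation of \Cref{result:cotinfo_ete_learning_mdl} with $p(h) = 2^{-\aabs{d(h)}}$, so that $\log(1/p(\hstar)) = \aabs{d(\hstar)}\log 2$ and the constant is absorbed into the $\calO(\cdot)$. Your observation that Kraft's inequality only guarantees sub-normalization, and that this is all the union bound $\sum_h \delta\, p(h) \leq \delta$ in the parent proof requires, is the right point to check and is handled correctly.
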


One advantage of such MDL-style analysis is obtaining a sample complexity that is instance-dependent (i.e., $\hstar$-dependent). In the standard end-to-end setting, we obtain a sample complexity of $\calO(\log (1/p(\hstar)) / \epsilon)$. In the CoT setting, we obtain an instance-dependent description of the sample complexity in both the numerator and denominator through the CoT information $\cotinfo(\epsilon; \calH)$.

\subsection{Transfer Learning \& Out-of-Distribution Generalization Under CoT Supervision}\label{ssec:transfer_learning}

In many applications where chain-of-thought learning is applied, \textit{out-of-distribution generalization} is a key aspect. This type of generalization requires compositional reasoning abilities, for example, learning a set of generally applicable atomic skills that can be recombined to generalize systematically to novel combinations of known elements. Of particular interest is learning from simple problem instances and generalizing to larger and more complex instances. This is sometimes referred to as \textit{length-generalization} when the notion of increased complexity corresponds to longer inputs.

Chain-of-thought learning has important implications for this type of generalization. In particular, it allows direct supervision on the ``atomic skills'' and how to combine them to solve problems. This can, in principle, enable systematic generalization by transforming the learning problem from one of learning input-output patterns to one of learning general principles that can be applied beyond the training distribution. 

In this section, we explore how the CoT information measure can be extended to analyze generalization under distribution shift.


The following generalized definition of CoT information captures the amount of information revealed about the end-to-end behavior on a test distribution $\calD_{\test}$ from observing a CoT-annotated sample drawn from a training distribution $\calD_{\tr}$.

\begin{definition}[Relative CoT Information Between a Pair of Distributions]\label{def:cotinfo_reldist}
    For a CoT hypothesis class $\calH \subset (\calZ \times \calY)^{\calX}$, we define the relative CoT-Information between a distribution $\calD_\tr$ and $\calD_\test$ as
    \begin{align*}
      \cotinfodomain{\calD_{\tr} \to \calD_{\test}}(\epsilon; \calH, \hstar) &= \inf_{h \in \Deltaete_{\calD_{\test}}(\epsilon; \calH, \hstar)} \set{ - \log \probunder{x \sim \calD_{\tr}}{\hcot{h}(x) = \hcot{\hstar}(x), \hete{h}(x) = \hete{\hstar}(x)}}.
    \end{align*}
    where the infimum is over $\Deltaete_{\calD_{\test}}(\epsilon; \calH, \hstar)$, the set of hypotheses that disagree with $\hstar$'s end-to-end behavior (i.e., output) on the test distribution with probability greater than $\epsilon$,
    \[\Deltaete_{\calD_{\test}}(\epsilon; \calH, \hstar) := \set{h \in \calH: \probunder{x \sim \calD_{\test}}{\hete{\hstar}(x) \neq \hete{h}(x)} > \epsilon}.\]
\end{definition}

Unlike the in-distribution setting (where $\calD_{\test} = \calD_{\tr}$), $\cotinfodomain{\calD_{\tr} \to \calD_{\test}}(\epsilon; \calH, \hstar) \geq \epsilon$ does not necessarily hold for arbitrary CoT hypothesis classes and pairs of distributions. However, we do have
\[\cotinfodomain{\calD_{\tr} \to \calD_{\test}}(\epsilon; \calH, \hstar) \geq \inf_{h \in \Deltaete_{\calD_{\test}}(\epsilon; \calH, \hstar)} \set{ \Lete{\calD_{\tr}}(h)}.\]

With strong CoT supervision on a sufficiently diverse training distribution, we would expect the relative CoT information to be large.

Analogous results to those presented in the main text in terms of the (standard) CoT information also apply in the transfer learning setting via the relative CoT information measure between two distributions defined above. For example, in the finite hypothesis class case, we have the following analogue of~\Cref{result:cotinfo_ete_learning}.

\begin{result}[Transfer Learning with Chain-of-Thought Supervision]\label{results:cotinfo_ete_transferlearning}
    For any finite CoT class $\calH$ and distributions $\calD_{\tr}$ (training) and $\calD_{\test}$ (test) over $\calX$,, the CoT consistency rule has sample complexity with respect to the $\calD_{\test}$-end-to-end error of
    \begin{equation*}
      m(\epsilon, \delta) = \frac{\log \abs{\calH} + \log(1 / \delta)}{\cotinfodomain{\calD_{\tr} \to \calD_{\test}}(\epsilon; \calH, \hstar)}.
    \end{equation*}
    That is, for $m \geq m(\epsilon, \delta)$, we have that with probability at least $1 - \delta$ over $S = \sset{x_1, \ldots, x_m} \simiid \calD_{\tr}$,
    \begin{equation*}
      \forall h \in \CoTCons(S; \calH), \ \text{we have} \ \Lete{\calD_{\test}}(h) \leq \epsilon.
    \end{equation*}
  \end{result}
  \begin{proof}
    Fix any $h \in \calH$ with end-to-end error under $\calD_{\test}$ larger than $\epsilon$, i.e., $\Lete{\calD_{\test}}(h) > \epsilon$ (so $h \in \Deltaete_{\calD_{\test}}(\epsilon; \calH, \hstar)$). We bound the probability that $h$ is CoT-consistent on $S$, $h \in \CoTCons(S; \calH) = \sset{h \in \calH : \empiricalcotrisk{S}(h) = 0}$, as follows
    \begin{align*}
      \probunder{S \sim \calD_{\tr}^{\otimes m}}{h \in \CoTCons(S; \calH)} &= \probunder{S \sim \calD_{\tr}^{\otimes m}}{\forall i, \hcot{h}(x_i) = \hcot{\hstar}(x_i), \hete{h}(x_i) = \hete{\hstar}(x_i)} \\
      &= \probunder{x \sim \calD_{\tr}}{\hcot{h}(x) = \hcot{\hstar}(x_i),\, \hete{h}(x_i) = \hete{\hstar}(x)}^m \\
      &\leq \exp\paren{-m \cdot \cotinfodomain{\calD_{\tr} \to \calD_{\test}}(\epsilon; \calH, \hstar)},
    \end{align*}
    where we use the definition of the relative CoT information and the fact that $h \in \Deltaete_{\calD_{\test}}(\epsilon; \calH, \hstar)$.

    Choosing $m \geq m(\epsilon, \delta)$ in the theorem statement guarantees that
    \[\probunder{S \sim \calD_{\tr}^{\otimes m}}{\exists h \in \calH : \Lete{\calD_{\test}}(h) > \epsilon,\, \empiricalcotrisk{S}(h) = 0} \leq \delta.\]
\end{proof}

An analogous result also holds for the infinite hypothesis class setting.

\begin{example}[Length-Generalization in Finite-State Machines]
    \Cref{example:fsm_cotinfo_lowerbound} shows that
    \[\min_{\epsilon > 0} \cotinfo(\epsilon; \calH) \geq \abs{\Sigma}^{-(\ell + 1)}\]
    for the class of finite-state machines when $\hstar$'s transition graph is $\ell$-connected and $\calD$ is a uniform distribution over inputs of length $n \geq \ell$. In fact, the same line of reasoning shows that
    \[\min_{\epsilon > 0} \cotinfodomain{\calD_{\tr} \to \calD_{\test}}(\epsilon; \calH, \hstar) \geq \abs{\Sigma}^{-(\ell + 1)}\]
    for any distribution $\calD_{\test}$ (i.e., of arbitrary length). The chain-of-thought annotations allow each component of the FSM's transition function to be identified and hence enable generalization to arbitrary test distributions.
\end{example}

\Cref{fig:cotinfo_transfer_simulations} depicts simulation results with the DFA example presented in~\Cref{sec:simulations}, exploring the relative CoT information between a pair of distributions, $\cotinfodomain{\calD_\tr \to \calD_\test}(\epsilon; \calH, \hstar)$ as defined above.~\Cref{fig:cotinfo_transfer:CoTInfo_by_trainlength} depicts the CoT information for a fixed test-distribution, varying the input length in the training distribution, showing that the CoT information curves are increasing with the input length. This suggests that longer, more complex inputs reveal more information about the underlying hypothesis and its input-output behavior. On the other hand,~\Cref{fig:cotinfo_transfer:CoTInfo_by_testlength} depicts the CoT information for fixed training distribution, with input length $L = 5$, varying the test distribution. We see that the CoT information remains relatively large, even for longer and more complex test inputs. This suggests that, under CoT supervision, the observations reveal enough about the hypothesis to identify its behavior beyond the training distribution.

\begin{figure}
   \centering
    \begin{subfigure}{0.475\linewidth}
        \includegraphics[width=\linewidth]{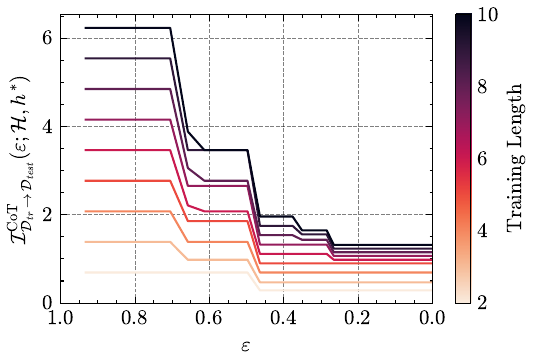}
        \caption{$\cotinfodomain{\calD_{\tr} \to \calD_{\test}}(\epsilon; \calH, \hstar)$ with fixed test distribution at length $L = 5$, and varying the training distribution.}\label{fig:cotinfo_transfer:CoTInfo_by_trainlength}
    \end{subfigure}
    \hfill
    \begin{subfigure}{0.475\linewidth}
        \includegraphics[width=\linewidth]{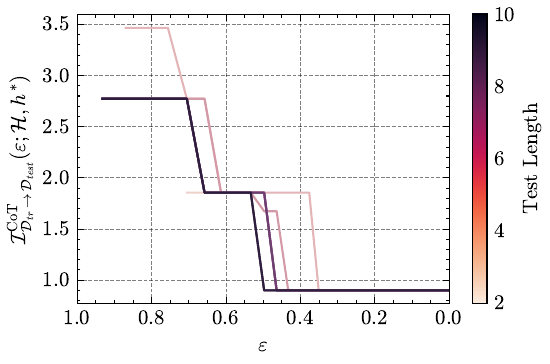}
        \caption{$\cotinfodomain{\calD_{\tr} \to \calD_{\test}}(\epsilon; \calH, \hstar)$ with fixed training distribution at length $L = 5$, and varying the test distribution.}\label{fig:cotinfo_transfer:CoTInfo_by_testlength}
    \end{subfigure}

    \caption{Simulations exploring $\cotinfodomain{\calD_\tr \to \calD_\test}(\epsilon; \calH, \hstar)$}\label{fig:cotinfo_transfer_simulations}
\end{figure}


\end{document}